\documentclass{article}
\usepackage{PRIMEarxiv}

\usepackage[utf8]{inputenc} 
\usepackage[T1]{fontenc}    
\usepackage[colorlinks=true,
            citecolor=blue,
            linkcolor=blue,
            urlcolor=blue]{hyperref}
\usepackage{url}            
\usepackage{nicefrac}       
\usepackage{microtype}      
\usepackage{lipsum}
\usepackage{fancyhdr}       
\usepackage{graphicx}       
\usepackage{subcaption}     
\graphicspath{{imgs/}}      

\usepackage{amsmath,amsthm,amssymb,amsfonts,mathtools}
\usepackage{booktabs,array,tabularx}
\usepackage{algorithm2e,algorithmic}
\usepackage{enumitem}
\usepackage[sort,numbers]{natbib}
\usepackage{xcolor}         
\usepackage{float}
\usepackage[title,titletoc]{appendix}
\usepackage[section]{placeins}

\SetKwInput{KwInit}{Initialization}
\SetKwInput{KwEvol}{Evolution}
\SetKwInput{KwOptional}{Optional input}
\DontPrintSemicolon

\usepackage{libertine}
\newtheorem{theorem}{Theorem}
\newtheorem*{theorem*}{Theorem}
\newtheorem{lemma}{Lemma}

\RestyleAlgo{ruled}

\title{
    Composing and Steering Diffusion Models with Generalized Feynman-Kac Correctors
}

\author{
    \large
    Wael Mattar \thanks{School of Mathematical Sciences, Tel Aviv University, Israel. \\ Emails: \texttt{waelmattar.95@gmail.com};\quad \texttt{d.e.freedman@gmail.com}}
    \and
    \large
    \textbf{Daniel Freedman }\footnotemark[1]
    }

\begin{document}

\maketitle

\begin{abstract}
Diffusion models have become the leading paradigm in generative modeling, yet many applications require sampling from distributions that are not those of any single pretrained model, but are instead defined through combinations of several models, possibly tilted toward high values of a downstream reward. Naively combining the scores of the models yields incorrect samples, while existing corrections are restricted to specific families of targets. In this work, we introduce the generalized Feynman--Kac correctors (GFKC), a principled framework for sampling from an arbitrary target function of several pretrained diffusion models. We show that the geometric interpolation between an arbitrary initial function and the target function of the models satisfies a Feynman--Kac partial differential equation, and derive its potential in closed form. To evaluate this potential during generation, we further generalize the log-probability ODE to track the density of each model along trajectories driven by a different velocity field. The resulting algorithm operates entirely in the log domain, incorporates reward-based steering at no additional conceptual cost, and corrects the generation on the fly, without retraining or fine-tuning the models. We validate the method on analytic Gaussian models, where it closely matches the ground-truth targets, and on diffusion models trained on MNIST, where it samples from non-trivial combinations of the models and enables effective steering.
\end{abstract}

\section{Introduction}

Diffusion models~\cite{sohl2015deep,ho2020denoising,song2021score} have become the leading paradigm in generative modeling. By learning to reverse a gradual noising process, they produce samples of remarkable quality and diversity, and they are now the method of choice across a wide range of domains, including image synthesis~\cite{dhariwal2021diffusion,karras2022elucidating,rombach2022high,saharia2022photorealistic}, audio~\cite{kong2021diffwave} and video generation~\cite{ho2022video}, inverse problems~\cite{chung2023diffusion}, and the design of molecules and proteins~\cite{hoogeboom2022equivariant,watson2023novo,abramson2024accurate}. Beyond their empirical success, diffusion models rest on a solid mathematical foundation: in the continuous-time formulation of~\cite{song2021score}, generation amounts to simulating a stochastic or ordinary differential equation driven by a learned score, whose marginals evolve according to an explicit partial differential equation. This structure makes it possible to reason about, and to modify, the generative process in a principled way.

In many applications, however, the distribution of interest is not that of a single pretrained model. One may wish to combine several models, each trained on different data or capturing a different concept, and to sample from a distribution that is defined in terms of all of them: their product, which retains only what all models agree on; their mixture; or more elaborate combinations, such as distributions favoring the regions where one model assigns high probability and another does not. Training a new model for every such target is costly and often impractical, and a natural alternative is to reuse the pretrained models at inference time~\cite{liu2022compositional}. Yet, naive approaches that simply combine the scores of the models do not reproduce the correct intermediate distributions, and therefore yield biased samples~\cite{du2023reduce}. Recently, \cite{skreta2025feynman} showed that this bias can be corrected exactly by means of the Feynman--Kac formula~\cite{del2004feynman}: the generated particles are assigned weights that evolve along the generation, and are periodically resampled within a sequential Monte Carlo scheme~\cite{doucet2001sequential}. These Feynman--Kac correctors, however, are derived separately for specific families of targets, namely annealed distributions, products of experts, and the geometric averages underlying classifier-free guidance~\cite{ho2022classifier}.

A complementary need arises when the generated samples are required not only to be realistic, but also to optimize a downstream objective, such as the stability or binding affinity of a molecule, or the similarity of an image to a prescribed reference. This is the task of steering, or reward-guided generation~\cite{uehara2025inference}. A principled formulation consists in maximizing the expected reward, regularized by the Kullback--Leibler divergence to the pretrained distribution, whose solution is the pretrained distribution exponentially tilted by the reward. One approach is to fine-tune the pretrained model toward this tilted distribution~\cite{black2024training,domingo2025adjoint}, which, however, requires additional training for every reward. Inference-time methods, such as classifier guidance~\cite{dhariwal2021diffusion} and SMC-based guidance~\cite{wu2023practical,singhal2025general}, instead approximate the tilted distribution during generation, without modifying the model, but are typically formulated for a single pretrained model.

In this paper, we unify and generalize these two lines of work. We introduce the generalized Feynman--Kac correctors (GFKC), which enable sampling from an arbitrary target function of several pretrained diffusion models, with an optional reward-based tilt. Our main result, Theorem~\ref{thm:main_theorem}, shows that the geometric interpolation between an arbitrary initial function and an arbitrary terminal function of the models satisfies a Feynman--Kac PDE, and provides its potential in closed form. To evaluate this potential along the generation, we derive a generalization of the log-probability ODE~\cite{chen2018neural,grathwohl2019ffjord}, which tracks the density of every model along trajectories generated by a different velocity field. The resulting algorithm operates entirely in the log domain, performs the correction on the fly during the generation process, and requires neither retraining nor fine-tuning of the models. The framework of~\cite{skreta2025feynman} is recovered as a special case.

The remainder of the paper is organized as follows. Section~\ref{sec:preliminaries} reviews continuous-time diffusion models, the Feynman--Kac correctors, and reward-guided tilting. Section~\ref{sec:derivation} presents our main result, the generalized log-probability ODE, the incorporation of steering, and the resulting GFKC algorithm. Section~\ref{sec:numerical_illustrations} illustrates the method in two settings of different nature: an analytic setting of overlapping Gaussian distributions, where the generated samples are compared with exact ground-truth densities, and a high-dimensional setting of diffusion models trained on the MNIST dataset~\cite{deng2012mnist}, where we demonstrate both the sampling from non-trivial combinations of the models and the effect of steering. All numerical experiments are fully reproducible, and the code, together with the pretrained models, is publicly available at~\url{https://github.com/WaelMattar/GFKC}.

\section{Preliminaries}\label{sec:preliminaries}

We review continuous-time diffusion models (Section~\ref{subsec:diffusion}), the Feynman--Kac correctors (Section~\ref{subsec:FKC}), and reward-guided tilting (Section~\ref{subsec:tilting}), which together form the basis of our method.

\subsection{Continuous-time diffusion models}\label{subsec:diffusion}

Diffusion models~\cite{sohl2015deep,ho2020denoising,song2021score} generate data by learning to reverse a gradual noising process. In the continuous-time formulation of~\cite{song2021score}, the noising process is described by a forward stochastic differential equation (SDE)
\begin{equation}\label{eqn:forward_SDE}
  \mathrm{d}x_u = f(x_u, u)\,\mathrm{d}u + \sigma_{\mathrm{diff}}(u)\,\mathrm{d}W_u, \qquad u \in [0,1], \qquad x_0 \sim p_{\mathrm{data}},
\end{equation}
where $f$ is a drift, $\sigma_{\mathrm{diff}}$ a scalar diffusion coefficient and $W_u$ a standard Brownian motion in $\mathbb{R}^d$. The coefficients are chosen so that the marginal distribution $q_u$ of $x_u$ approaches a tractable prior as $u \rightarrow 1$. A widely used choice, which we adopt in Section~\ref{sec:numerical_illustrations}, is the variance-preserving (VP) SDE, with $f(x,u) = -\tfrac{1}{2}\beta(u)\,x$ and $\sigma_{\mathrm{diff}}(u) = \sqrt{\beta(u)}$ for a positive noise schedule $\beta$.


Generation proceeds backward in $u$. To be consistent with the convention used throughout this paper, we reparametrize time as $t = 1 - u$ and denote by $p(x,t) := q_{1-t}(x)$ the corresponding marginals, so that $p(\cdot, 0) \approx \mathcal{N}(0, I_d)$ corresponds to noise and $p(\cdot, 1) = p_{\mathrm{data}}$ to data. By the classical time-reversal result of~\cite{anderson1982reverse}, the marginals $p(\cdot, t)$ are reproduced by the family of generative dynamics
\begin{equation}\label{eqn:reverse_SDE_family}
  \mathrm{d}x_t = \Big[-f(x_t, 1-t) + \tfrac{1+\xi}{2}\,\sigma_{\mathrm{diff}}(1-t)^2\, s(x_t, t)\Big]\mathrm{d}t + \sqrt{\xi}\,\sigma_{\mathrm{diff}}(1-t)\,\mathrm{d}W_t, \qquad \xi \geq 0,
\end{equation}
where $s(x,t) = \nabla \log p(x,t)$ is the score. The case $\xi = 1$ recovers the reverse-time SDE, whereas $\xi = 0$ yields the deterministic probability flow ODE~\cite{song2021score}
\begin{equation}\label{eqn:probability_flow}
  \frac{\mathrm{d}x_t}{\mathrm{d}t} = v(x_t, t), \qquad v(x,t) = -f(x, 1-t) + \tfrac{1}{2}\,\sigma_{\mathrm{diff}}(1-t)^2\, s(x,t).
\end{equation}
Intermediate values of $\xi$ interpolate between the two regimes; see also~\cite{karras2022elucidating}. Regardless of $\xi$, the marginals satisfy the continuity equation
\begin{equation}\label{eqn:continuity}
  \partial_t p(x,t) = -\big\langle \nabla, p(x,t)\, v(x,t) \big\rangle.
\end{equation}

In practice, the score is unknown and is approximated by a neural network trained via denoising score matching~\cite{hyvarinen2005estimation,vincent2011connection}. A common parametrization, which we also adopt, is the noise-prediction objective of~\cite{ho2020denoising}: a network $\hat{\varepsilon}(x, u)$ is trained to minimize $\mathbb{E}\,\|\hat{\varepsilon}(x_u, u) - \varepsilon\|^2$, where $x_u = \alpha(u) x_0 + \sigma(u)\,\varepsilon$ with $\varepsilon \sim \mathcal{N}(0, I_d)$, and the score is recovered as $s(x,t) \approx -\hat{\varepsilon}(x,t)/\sigma(t)$. Finally, the probability flow ODE~\eqref{eqn:probability_flow} also enables likelihood evaluation: along its trajectories, the instantaneous change-of-variables formula~\cite{chen2018neural,grathwohl2019ffjord} gives
\begin{equation}\label{eqn:log_prob_ODE}
  \frac{\mathrm{d}}{\mathrm{d}t} \log p(x_t, t) = -\big\langle \nabla, v(x_t, t) \big\rangle,
\end{equation}
where, in high dimension, the divergence is typically estimated with the Hutchinson trace estimator~\cite{hutchinson1989stochastic}. As discussed in Section~\ref{sec:derivation}, equation~\eqref{eqn:log_prob_ODE} only holds when the particle is transported by the velocity field of the model whose density is evaluated, and must be generalized in our multi-agent setting. This equation will become essential in deriving our main algorithm.

\subsection{Feynman--Kac correctors}\label{subsec:FKC}

Many distributions of practical interest are not the marginals of a single trained diffusion model, even when they are explicitly defined in terms of such models. Typical examples include the annealed distribution $p^{\beta}$, the product $p_1 p_2$ of two models, and the geometric average $p_1^{1-w} p_2^{w}$, which underlies classifier-free guidance~\cite{ho2022classifier}. A natural heuristic is to simulate~\eqref{eqn:reverse_SDE_family} with the correspondingly combined scores, e.g., $\beta s$ or $s_1 + s_2$. However, the resulting dynamics do not reproduce the desired intermediate distributions, and the generated samples are therefore biased~\cite{du2023reduce}.

To address this issue, \cite{skreta2025feynman} proposed the Feynman--Kac correctors (FKC). Their key observation is that, for each of the aforementioned targets, the time derivative of the desired path of distributions can be written as a continuity equation of the form~\eqref{eqn:continuity}, augmented with a term of the form $p\,g$, where $g$ is commonly referred to as the \emph{potential function}, see~\cite{del2004feynman}. In other words, it satisfies a Feynman--Kac partial differential equation (PDE), with a velocity field and a potential that are explicitly computable from the pretrained scores. By the Feynman--Kac formula~\cite{del2004feynman}, such a PDE can be simulated by a population of weighted particles: the particles are transported by the velocity field, while their log-weights accumulate the potential along the trajectories. The weights are then used within a sequential Monte Carlo (SMC) scheme~\cite{doucet2001sequential,del2004feynman}, in which particles are periodically resampled in proportion to their weights, which prevents weight degeneracy and focuses the computational effort on the relevant regions of space. This principle is formalized in the following lemma, which will become essential to proving our main result.

\begin{lemma}\label{lem:feynman_kac}
    Let $p(x,t):\mathbb{R}^d\times [0,1]\rightarrow \mathbb{R}_+$ be a function with $p(\cdot, t) \in L^1(\mathbb{R}^d)$ for every $t \in [0,1]$. Assume that $p(x,t)$ adheres to the following Feynman--Kac PDE
    \begin{equation}\label{eqn:Feynman_Kac_diffusion_models}
        \partial_t p(x, t) = - \big\langle \nabla, p(x, t) \, v(x, t) \big\rangle + p(x, t)\, g(x, t)
    \end{equation}
    for some vector field $v(x,t)$ in $\mathbb{R}^d$, and a real-valued potential function $g(x,t)$. Then, samples from the normalized density $\widetilde{p}$ can be drawn by simulating the weighted SDE
    \begin{equation}\label{eqn:SDE_simulation}
        \mathrm{d}x_t = v(x_t, t)\,\mathrm{d}t \quad \text{and} \quad \mathrm{d}w_t = \overline{g}(x_t,t)\,\mathrm{d}t,
    \end{equation}
    where
    \begin{equation}\label{eqn:g_bar}
        \overline{g}(x,t) = g(x,t) - \int_{\mathbb{R}^d}g(x^\prime,t)\,\widetilde{p}(x^\prime,t)\,\mathrm{d}x^\prime.
    \end{equation}
\end{lemma}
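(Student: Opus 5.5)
We interpret the statement in its standard weighted-particle sense. Suppose particles are transported by $\mathrm{d}x_t = v(x_t,t)\,\mathrm{d}t$ and carry log-weights $w_t$ that evolve as in \eqref{eqn:SDE_simulation}. Then the weighted measure $\rho_t(\varphi) := \mathbb{E}\big[e^{w_t}\varphi(x_t)\big]$ should equal $\int \varphi\,\widetilde{p}(\cdot,t)$ for every $t$, provided this holds at $t=0$, i.e.\ $x_0\sim\widetilde{p}(\cdot,0)$ and $w_0=0$. The plan is to show that the weighted density of the particles and $\widetilde{p}$ solve the same PDE, and then to invoke uniqueness.

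\textbf{Step 1: PDE for the normalized density.} Write $Z_t = \int p(x,t)\,\mathrm{d}x$ and $\widetilde{p} = p/Z_t$. Integrating \eqref{eqn:Feynman_Kac_diffusion_models} over $\mathbb{R}^d$ kills the divergence term, assuming $p\,v$ decays at infinity. This gives $\dot Z_t = Z_t\int g\,\widetilde{p}\,\mathrm{d}x$, so $\partial_t \log Z_t = \mathbb{E}_{\widetilde{p}}[g]$. Hence
\[
\partial_t \widetilde{p} = \frac{\partial_t p}{Z_t} - \widetilde{p}\,\partial_t\log Z_t = -\langle\nabla,\widetilde{p}\,v\rangle + \widetilde{p}\,\overline{g},
\]
with $\overline{g}$ as in \eqref{eqn:g_bar}. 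Thus $\widetilde{p}$ satisfies a Feynman--Kac PDE whose potential has zero mean under $\widetilde{p}$.

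\textbf{Step 2: PDE for the weighted particle measure.} Fix a smooth compactly supported test function $\varphi$. The map $t\mapsto e^{w_t}\varphi(x_t)$ is differentiable, and the chain rule gives
\[
\frac{\mathrm{d}}{\mathrm{d}t}\, e^{w_t}\varphi(x_t) = e^{w_t}\big(\langle\nabla\varphi(x_t),v(x_t,t)\rangle + \overline{g}(x_t,t)\varphi(x_t)\big).
\]
Taking expectations yields $\frac{\mathrm{d}}{\mathrm{d}t}\rho_t(\varphi) = \rho_t\big(\langle\nabla\varphi,v\rangle + \overline{g}\varphi\big)$. This is exactly the weak form of the PDE from Step 1, once the divergence term is integrated by parts against $\varphi$. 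Here $\overline{g}$ is a fixed, known function of $(x,t)$, defined through $\widetilde{p}$ itself, so the PDE for $\rho_t$ is linear.

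\textbf{Step 3: uniqueness.} Both $\rho_t$ and $\widetilde{p}(\cdot,t)\,\mathrm{d}x$ solve the same linear weak equation and agree at $t=0$. Under standard regularity assumptions (Lipschitz $v$, locally bounded $g$), the characteristics formula gives uniqueness of solutions. Concretely, this solution is the push-forward of $\widetilde{p}(\cdot,0)$ along the flow of $v$, multiplied by $\exp\int_0^t\overline{g}$ along the characteristics. It follows that $\rho_t=\widetilde{p}(\cdot,t)$, and in particular $\mathbb{E}[e^{w_t}]=1$. Normalized importance weights $e^{w_t^i}/\sum_j e^{w_t^j}$ then yield consistent self-normalized estimates, and resampling (SMC) produces approximate samples from $\widetilde{p}$.

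\textbf{Main obstacle.} The delicate point is Step 3 together with the justification of the boundary terms in Steps 1--2. It requires decay of $p\,v$ at infinity and enough regularity of $v$ and $g$ for the linear transport-reaction equation to be well posed; the lemma leaves these implicit. I would state them as standing assumptions and prove uniqueness by the method of characteristics. Along the flow map $X_t$ of $v$, set $q(t)=\widetilde{p}(X_t(x),t)\det\nabla X_t(x)$. Then $q$ satisfies the scalar ODE $\dot q = \overline{g}\,q$, whose solution is unique. Note also that in practice $\overline{g}$ is unknown, but its mean term cancels under self-normalization. It therefore suffices to use $g$ itself, which is why only the normalized output is claimed.
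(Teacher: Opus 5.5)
Your proof is correct. Your Step~1 coincides with the paper's argument in Appendix~\ref{app:lemma_proof}: integrate the PDE to obtain $\partial_t Z/Z=\int g\,\widetilde p$, then deduce that $\widetilde p$ obeys the Feynman--Kac PDE with the centered potential $\overline g$. The paper stops there and cites Section~2 of \cite{skreta2025feynman} for the fact that such an equation is simulated by the weighted dynamics. Your Steps~2--3 instead prove this directly, by showing that $\rho_t(\varphi)=\mathbb{E}[e^{w_t}\varphi(x_t)]$ solves the same linear equation. Your route is longer but self-contained. It also makes explicit what the lemma leaves implicit:
\begin{itemize}
\item the initialization $x_0\sim\widetilde p(\cdot,0)$, $w_0=0$;
\item decay of $p\,v$ at infinity, which is what the divergence theorem really requires (not merely $\widetilde p\to 0$);
\item regularity of $v$ and $g$;
\item the identity $\mathbb{E}[e^{w_t}]=1$, which justifies the paper's later remark that the uncentered $g$ suffices under self-normalization.
\end{itemize}
One simplification is available. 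The characteristics computation in your final paragraph is by itself a complete proof of Steps~2--3. Since $\widetilde p(X_t(x),t)\det\nabla X_t(x)=\widetilde p(x,0)\,e^{w_t(x)}$, the change of variables $y=X_t(x)$ gives $\int\varphi(y)\,\widetilde p(y,t)\,\mathrm{d}y=\mathbb{E}[e^{w_t}\varphi(x_t)]$ directly. You therefore need neither the weak formulation nor a uniqueness theorem for measure-valued solutions. It suffices that $\widetilde p$ is a classical solution and that $v$ generates a $C^1$ flow.
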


We provide the proof of Lemma~\ref{lem:feynman_kac} in Appendix~\ref{app:lemma_proof} for completeness. In practice, the weights in~\eqref{eqn:SDE_simulation} relate the simulated particles to the normalized density $\widetilde{p}(\cdot, t)$ as follows~\cite{skreta2025feynman}. Denoting by $w_t = \int_0^t g(x_s, s)\,\mathrm{d}s$ the log-weight accumulated along a trajectory, one has, for any bounded test function $\phi:\mathbb{R}^d\rightarrow\mathbb{R}$,
\begin{equation*}
    \mathbb{E}_{x \sim \widetilde{p}(\cdot, t)}\big[\phi(x)\big] = \frac{1}{Z_t}\, \mathbb{E}\big[e^{w_t}\, \phi(x_t)\big], \qquad Z_t = \mathbb{E}\big[e^{w_t}\big],
\end{equation*}
where the expectations on the right-hand side are taken over the trajectories of~\eqref{eqn:SDE_simulation}. Given $K\in\mathbb{N}$ simulated particles $\{(x_t^{(k)}, w_t^{(k)})\}_{k=1}^K$, replacing both expectations by empirical averages yields the self-normalized importance sampling estimator
\begin{equation*}\label{eqn:SNIS}
  \mathbb{E}_{x \sim \widetilde{p}(\cdot, t)}\big[\phi(x)\big] \approx \sum_{k=1}^{K} \frac{e^{w_t^{(k)}}}{\sum_{\ell=1}^{K} e^{w_t^{(\ell)}}}\, \phi\big(x_t^{(k)}\big),
\end{equation*}
which converges to the exact expectation as $K \to \infty$. Accordingly, approximate samples from $\widetilde{p}(\cdot, t)$ are obtained by drawing particles from the categorical distribution with probabilities proportional to $\{e^{w_t^{(k)}}\}_{k=1}^K$, which is precisely the resampling step of Algorithm~\ref{alg:GFKC}.

Lemma~\ref{lem:feynman_kac} is a direct consequence of the Feynman--Kac formula~\cite{del2004feynman}, and underlies the constructions of~\cite{skreta2025feynman}. Note that when $g \equiv 0$, the PDE~\eqref{eqn:Feynman_Kac_diffusion_models} reduces to the continuity equation~\eqref{eqn:continuity}, the weights remain constant, and the lemma recovers standard sampling with the velocity field $v$. Moreover, the centering term in~\eqref{eqn:g_bar} is common to all particles at a given time, and therefore cancels upon normalization of the weights. Consequently, in practice one may accumulate the uncentered potential $g$ and normalize the weights only when they are used, as is done in Algorithm~\ref{alg:GFKC}. Finally, we emphasize that, in~\cite{skreta2025feynman}, the potential $g$ is derived separately for each specific family of targets. Theorem~\ref{thm:main_theorem} below removes this restriction by providing the potential for an arbitrary combination of the agents.

\subsection{Reward-guided tilting}\label{subsec:tilting}

A complementary line of work concerns the adaptation of a pretrained diffusion model to maximize a downstream reward $r : \mathbb{R}^d \to \mathbb{R}$ at inference time, without fine-tuning~\cite{uehara2025inference}. Maximizing the reward alone would collapse the generation onto its maximizers and discard the information encoded in the pretrained model. The standard remedy is to regularize the objective by the Kullback--Leibler divergence to the pretrained distribution $p_{\mathrm{pre}}$,
\begin{equation}\label{eqn:KL_regularized}
  q^{\star} = \operatorname*{arg\,max}_{q} \; \Big\{ \mathbb{E}_{x \sim q}\big[r(x)\big] - \kappa\, \mathrm{KL}\big(q \,\|\, p_{\mathrm{pre}}\big) \Big\}, \qquad \kappa > 0.
\end{equation}
This problem admits the closed-form solution
\begin{equation}\label{eqn:tilted_solution}
  q^{\star}(x) = \frac{1}{Z_{\kappa}}\, p_{\mathrm{pre}}(x)\, e^{r(x)/\kappa}, \qquad Z_{\kappa} = \int_{\mathbb{R}^d} p_{\mathrm{pre}}(x)\, e^{r(x)/\kappa}\, \mathrm{d}x,
\end{equation}
which follows from the identity
\begin{equation}\label{eqn:gibbs_identity}
  \mathbb{E}_{q}[r] - \kappa\, \mathrm{KL}(q \,\|\, p_{\mathrm{pre}}) = \kappa \log Z_{\kappa} - \kappa\, \mathrm{KL}(q \,\|\, q^{\star}),
\end{equation}
valid for any $q$ with $\mathrm{KL}(q \,\|\, p_{\mathrm{pre}}) < \infty$. Indeed, the right-hand side is maximized if and only if $q = q^{\star}$. The parameter $\kappa$ balances the two objectives: as $\kappa \to \infty$, the tilted distribution $q^{\star}$ reduces to $p_{\mathrm{pre}}$, whereas as $\kappa \to 0^+$, it concentrates on the maximizers of $r$ within the support of $p_{\mathrm{pre}}$.

As reviewed in~\cite{uehara2025inference}, existing inference-time methods, such as classifier guidance~\cite{dhariwal2021diffusion}, SMC-based guidance~\cite{wu2023practical} and value-based sampling, can be viewed as approximations of the optimal denoising process that samples from~\eqref{eqn:tilted_solution}. A central ingredient of these methods is a look-ahead function, the soft value function, which predicts, from an intermediate state $x_t$, the reward attained at the end of the generation. Since the soft value function is generally intractable, it is commonly approximated by evaluating the reward at an estimate $\widehat{x}_1$ of the terminal state, obtained for instance from the posterior mean via Tweedie's formula~\cite{chung2023diffusion} or by simulating the generative dynamics up to the terminal time. We adopt the latter approach in Section~\ref{sec:numerical_illustrations}.

In this work, we apply the same principle to targets composed of several diffusion models. Given a terminal function $h_1$ of the agents, we denote by $\widetilde{h}_1$ its normalization, and consider the tilted target~\eqref{eqn:tilted_target}, which, by~\eqref{eqn:tilted_solution}, is the solution of~\eqref{eqn:KL_regularized} with $p_{\mathrm{pre}} = \widetilde{h}_1$.

\paragraph{Our contribution.} The FKC framework~\cite{skreta2025feynman} provides exact weighted sampling for specific families of targets built from pretrained models, while reward-guided methods~\cite{uehara2025inference} tilt the distribution of a single pretrained model toward high values of a reward. In this paper, we unify and generalize both approaches. We derive the Feynman--Kac potential for an arbitrary target function $h_1$ of $N$ diffusion models, interpolated geometrically from an arbitrary initial function $h_0$, with an optional reward-based tilt. The resulting GFKC algorithm performs the correction on the fly, during the generation process, and requires neither retraining nor fine-tuning of the agents.

\section{Derivation of the generalized Feynman-Kac correctors algorithm}\label{sec:derivation}

In this section, we present our main result, which generalizes the Feynman--Kac correctors of~\cite{skreta2025feynman} from specific families of targets to arbitrary target functions of several pretrained diffusion models, and incorporates the reward-based tilting of Section~\ref{subsec:tilting} into the same framework.

We consider $N \geq 1$ pretrained diffusion models, henceforth referred to as \emph{agents}, with marginals $p_1(x,t), \dots, p_N(x,t)$, scores $s_i = \nabla \log p_i$ and velocity fields $v_i$, $i = 1, \dots, N$. As in Section~\ref{subsec:diffusion}, each agent is assumed to satisfy the continuity equation~\eqref{eqn:continuity} with its own velocity field, namely $\partial_t p_i = -\langle \nabla, p_i v_i \rangle$. Moreover, we assume that all agents share a common, tractable initial distribution at $t = 0$, as is the case for diffusion models with the same forward process, $p_i(\cdot, 0) = \mathcal{N}(0, I_d)$.

Our goal is to sample from a target distribution that is defined pointwise in terms of the agents. Specifically, given a positive and differentiable \emph{terminal function} $h_1 : \mathbb{R}_+^N \to \mathbb{R}_+$, we wish to sample from the normalized density proportional to
\begin{equation*}
    h_1\big(p_1(x,1), \dots, p_N(x,1)\big)
\end{equation*}
where the agents are evaluated at the terminal time $t = 1$, i.e., at their data distributions. This formulation encompasses, for instance, products of experts, geometric averages underlying classifier-free guidance~\cite{ho2022classifier}, and annealed distributions, which are the targets considered in~\cite{skreta2025feynman}. It also includes targets that are not of product form, such as the mixtures and ratios studied in Section~\ref{sec:numerical_illustrations}. We refer to a terminal function as \emph{admissible} whenever the resulting density is both differentiable and integrable, so that its normalization is well defined.

Since the target is generally not the terminal marginal of any single agent, we construct a path of distributions connecting a tractable initial distribution to the target. To this end, we introduce an \emph{initial function} $h_0 : \mathbb{R}_+^N \to \mathbb{R}_+$, chosen so that $h_0\big(p_1(\cdot,0), \dots, p_N(\cdot,0)\big)$ is easy to sample from. For instance, when the agents share a common prior $p_0$, any $h_0$ with $h_0(p_0, \dots, p_0) = p_0$, such as the geometric mean, is a valid choice. A natural and flexible family of initial functions is given by
\begin{equation*}\label{eqn:quasi_arithmetic}
  h_0(p_1, \dots, p_N) = \phi^{-1}\Big( \frac{1}{N} \sum_{i=1}^{N} \phi(p_i) \Big),
\end{equation*}
where $\phi : \mathbb{R}_+ \rightarrow \mathbb{R}$ is continuous and strictly monotone. This family includes the arithmetic mean for $\phi(z) = z$, the geometric mean for $\phi(z) = \log z$, and, more generally, the power means for $\phi(z) = z^{\alpha}$ with $\alpha \neq 0$, among which the harmonic mean corresponds to $\alpha = -1$. Every member of the family satisfies $h_0(p_0, \dots, p_0) = p_0$, and is therefore a valid initial function whenever the agents share a common prior $p_0$. The choice of $\phi$ thus offers considerable freedom in designing the initial distribution of the path~\eqref{eqn:trajectory_distribution}, without affecting the target $h_1$ at $t = 1$. In our experiments in Section~\ref{sec:numerical_illustrations}, we use the geometric mean.

The two functions are then interpolated geometrically by means of differentiable schedules $a_0, a_1 : [0,1] \to [0,1]$ satisfying $a_0(0) = a_1(1) = 1$ and $a_0(1) = a_1(0) = 0$. Throughout, we write $\partial_{\log p_j} \log h := p_j\, \partial_{p_j} \log h$ for the derivative of $\log h$ with respect to $\log p_j$, which is the natural quantity in the log domain.

The following theorem shows that the resulting path satisfies a Feynman--Kac PDE of the form~\eqref{eqn:Feynman_Kac_diffusion_models}, with an explicit potential, so that Lemma~\ref{lem:feynman_kac} applies.

\begin{theorem}~\label{thm:main_theorem}
    Let $\{p_i(x,t)\}_{i=1}^{N}$ be a set of diffusion models with corresponding vector fields $v_i$. Let $h_0:\mathbb{R}^N\rightarrow\mathbb{R}$ be an initial annealing function, and denote by $h_1:\mathbb{R}^N\rightarrow\mathbb{R}$ the target distribution function. Then, simulating~\eqref{eqn:SDE_simulation} with an arbitrary unifying vector field $v(x,t)=V\big(v_1(x,t),\dots, v_N(x,t)\big)$ and
    \begin{align}~\label{eqn:g_weights_evolution}
        g & = \dot{a}_0 \log h_0 + \dot{a}_1 \log h_1 + \langle \nabla, v \rangle \nonumber \\ & - \sum_{j=1}^N \big( \langle \nabla, v_j \rangle + \langle v_j - v, s_j \rangle \big)\big(a_0 \partial_{\log p_j} \log h_0 + a_1 \partial_{\log p_j} \log h_1\big),
    \end{align}
    amounts to sampling from the trajectory 
    \begin{equation}~\label{eqn:trajectory_distribution}
        p(x,t) = h_0\big(p_1(x,t),\dots,p_N(x,t)\big)^{a_0(t)}h_1\big(p_1(x,t),\dots,p_N(x,t)\big)^{a_1(t)},
    \end{equation}
    which geometrically interpolates between $h_0$ at $t=0$ and $h_1$ at $t=1$.
\end{theorem}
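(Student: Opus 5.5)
The plan is to verify directly that the path $p(x,t)$ in~\eqref{eqn:trajectory_distribution} satisfies the Feynman--Kac PDE~\eqref{eqn:Feynman_Kac_diffusion_models} with velocity $v$ and potential $g$ given by~\eqref{eqn:g_weights_evolution}. Lemma~\ref{lem:feynman_kac} then yields the conclusion immediately. Since everything is multiplicative, I will work in the log domain. Dividing~\eqref{eqn:Feynman_Kac_diffusion_models} by $p$ shows that it is equivalent to
\begin{equation*}
    g = \partial_t \log p + \langle \nabla, v\rangle + \langle v, \nabla \log p\rangle .
\end{equation*}
So it suffices to compute $\partial_t \log p$ and $\nabla \log p$ explicitly, and check that the right-hand side equals~\eqref{eqn:g_weights_evolution}.

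First, write $\log p = a_0 \log h_0 + a_1 \log h_1$, with $h_k$ evaluated at $(p_1,\dots,p_N)$. The chain rule in the log variables gives
\begin{equation*}
    \partial_t \log p = \dot a_0 \log h_0 + \dot a_1 \log h_1 + \sum_{j=1}^N c_j\, \partial_t \log p_j, \qquad c_j := a_0 \partial_{\log p_j}\log h_0 + a_1 \partial_{\log p_j}\log h_1 .
\end{equation*}
Similarly, $\nabla \log p = \sum_j c_j\, s_j$. Second, I will use each agent's continuity equation $\partial_t p_i = -\langle \nabla, p_i v_i\rangle$. Dividing by $p_j$ gives $\partial_t \log p_j = -\langle\nabla, v_j\rangle - \langle v_j, s_j\rangle$.

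Substituting both expressions gives
\begin{equation*}
    g = \dot a_0 \log h_0 + \dot a_1 \log h_1 + \langle\nabla, v\rangle - \sum_j c_j\big(\langle\nabla, v_j\rangle + \langle v_j, s_j\rangle - \langle v, s_j\rangle\big),
\end{equation*}
which is exactly~\eqref{eqn:g_weights_evolution}. Note that $v$ is arbitrary, so the particular form $V(v_1,\dots,v_N)$ plays no role beyond regularity.

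The main obstacle is not the algebra but the hypotheses of Lemma~\ref{lem:feynman_kac}. I must check the following:
\begin{itemize}
    \item $p(\cdot,t)\in L^1$ for every $t$, which requires admissibility along the whole path and not only at $t=0,1$;
    \item positivity and differentiability of $h_0,h_1$, so that the logarithms and $\partial_{\log p_j}$ are defined;
    \item the endpoint identification: $p(\cdot,0)=h_0(p_1(\cdot,0),\dots)$ is the tractable initial law from which particles are drawn, with zero initial log-weights, and $p(\cdot,1)=h_1(p_1(\cdot,1),\dots)$, which uses $a_0(0)=a_1(1)=1$ and $a_0(1)=a_1(0)=0$.
\end{itemize}
I would state integrability of the interpolated path as a standing assumption; for instance, it follows from Hölder-type bounds when $h_0,h_1$ are integrable. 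I would also note that the centering term in $\overline g$ cancels upon normalization, so using $g$ suffices.
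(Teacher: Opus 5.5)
Your proposal is correct and follows the paper's route. The paper also expands $\mathcal{C}=\partial_t\log p$ and $\mathcal{D}=\langle\nabla,v\rangle+\langle v,\nabla\log p\rangle$ by the chain rule, substitutes each agent's continuity equation, reads off $g=\mathcal{C}+\mathcal{D}$, and appeals to Lemma~\ref{lem:feynman_kac}. Your added care about integrability along the path goes beyond the paper, but note that the H\"older bound you mention works directly only when $a_0+a_1=1$ (as with the linear schedules used in the experiments), and it needs integrability of $h_0,h_1$ evaluated at the intermediate marginals, not just at the endpoints.
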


\begin{proof}
    As in Equation~\eqref{eqn:g_weights_evolution}, we omit the variables $x$ and $t$ throughout the proof for compactness. In the logarithmic domain, the trajectory distribution $p$ of~\eqref{eqn:trajectory_distribution} becomes
    \begin{equation*}
        \log p = a_0\log h_0\big(p_1,\dots,p_N\big) + a_1 \log h_1 \big(p_1,\dots,p_N\big).
    \end{equation*}
    Consequently, differentiating $\log p$ with respect to $t$ and multiplying the result with $p$ we get $\partial_t p$:
    \begin{equation}~\label{eqn:diff_log_domain}
        \partial_t p = p \left[
        \dot{a}_0 \log h_0 + a_0 \sum_{i=1}^N (\partial_{p_i} \log h_0) \, (\partial_t p_i) + \dot{a}_1 \log h_1 + a_1 \sum_{i=1}^N (\partial_{p_i} \log h_1) \, (\partial_t p_i) \right] =: p\mathcal{C}.
    \end{equation}
    We now simplify the term $\partial_t p_i$ in $\mathcal{C}$ using the agents' continuity equations~\eqref{eqn:continuity} and get
    \begin{align*}
        \partial_t p_i & = -\langle \nabla, p_iv_i \rangle \\
        & = - p_i \langle \nabla, v_i \rangle - \langle v_i, \nabla p_i \rangle \\
        & = - p_i \big( \langle \nabla, v_i \rangle + \langle v_i, s_i \rangle \big).
    \end{align*}
    Here we used the fact that $\nabla p_i = p_i\nabla \log p_i = p_is_i$ for all $i=1,\dots, N$. Before proceeding to the final step of the proof, we note that for an arbitrary unifying vector field $v$, the term $\langle \nabla, pv\rangle$ can be rewritten as
    \begin{equation*}
        \langle \nabla, pv\rangle = p\langle \nabla, v\rangle + \langle v, \nabla p\rangle = p \big(\langle \nabla, v\rangle + \langle v, \nabla \log p\rangle\big) =: p \mathcal{D},
    \end{equation*}
   while the term $\mathcal{D}$ can be simplified by directly computing $\nabla \log p$:
    \begin{equation*}
        \nabla \log p = \nabla \big( a_0 \log h_0 + a_1 \log h_1 \big) = \sum_{i=1}^N \big(a_0 \partial_{p_i} \log h_0 + a_1 \partial_{p_i} \log h_1\big) p_i s_i,
    \end{equation*}
    which overall gives
    \begin{equation}~\label{eqn:D_function}
        \mathcal{D} = \langle \nabla, v \rangle + \sum_{i=1}^N p_i \langle v, s_i \rangle \big(a_0 \partial_{p_i} \log h_0 + a_1 \partial_{p_i} \log h_1\big).
    \end{equation}

    Finally, combining~\eqref{eqn:diff_log_domain} with~\eqref{eqn:D_function} we get
    \begin{align*}
        \partial_t p & = p\mathcal{C} - \langle \nabla, p v \rangle + \langle \nabla, p v \rangle \\ & = -\langle \nabla, p v \rangle + (\mathcal{C} + \mathcal{D}) p \\ & = -\langle \nabla, p v \rangle + gp,
    \end{align*}
    which coincides precisely with the Feynman-Kac equation~\eqref{eqn:Feynman_Kac_diffusion_models} with the potential function $g$ as in~\eqref{eqn:g_weights_evolution}.
\end{proof}

Theorem~\ref{thm:main_theorem} establishes an interpolation between an initial function $h_0$ and a desired terminal function $h_1$ along the path given in~\eqref{eqn:trajectory_distribution}. The sampling process consists of simulating the SDE~\eqref{eqn:SDE_simulation} with a suitable choice of the function $g$, which is constructed so that, at the terminal time $t=1$, the reweighted samples are distributed according to $h_1$. The choice of $V$ is discussed before the derivation of our main algorithm.

To evaluate the density of a sample under a flow model, one typically relies on the instantaneous change-of-variables formula, commonly known as the log-probability ODE. In our setting, however, this ODE cannot be applied directly, for reasons we now explain, and must instead be suitably modified.

Although the particles evolve under a velocity field $v(x,t)$, we must compute, for every particle, the densities $p_j(x,t)$ associated with each model $j$. In other words, even though a particle is transported by a velocity field $v\neq v_j$, its density must still be evaluated under every model $j$. This is achieved by a generalization of the log-probability ODE~\eqref{eqn:log_prob_ODE} as follows.
\begin{align*}
    \frac{\mathrm{d}}{\mathrm{d}t} \log p_j(x_t, t)
    & = \frac{1}{p_j(x_t, t)} \frac{\mathrm{d}}{\mathrm{d}t} p_j(x_t, t) \\
    & = \frac{1}{p_j(x_t, t)} \left(
    \frac{\partial}{\partial t} p_j(x_t, t) + \left\langle \nabla p_j(x_t, t), \frac{\mathrm{d}x_t}{\mathrm{d}t} \right\rangle
    \right) \\
    & = \frac{1}{p_j(x_t, t)} \left( - \big\langle \nabla, p_j(x_t, t) v_j(x_t, t) \big\rangle
     + \left\langle \nabla p_j(x_t, t), \frac{\mathrm{d}x_t}{\mathrm{d}t} \right\rangle
    \right) \\
    & = p_j^{-1} \big(-p_j \langle \nabla, v_j \rangle - \langle v_j, \nabla p_j \rangle + \langle v, \nabla p_j \rangle \big) \\
    & = -\langle \nabla, v_j(x_t, t) \rangle - \langle v_j(x_t, t) - v(x_t, t), s_j(x_t, t) \rangle.
\end{align*}
Here, the second line follows by expanding the total time derivative into the partial time derivative and the contribution from the time derivative of the trajectory $x_t$; the third line uses the continuity equation for $p_j$; the fourth line uses $\mathrm{d}x_t/\mathrm{d}t = v(x_t,t)$, since the trajectory is generated by the velocity field $v$; and the fifth line uses the identity $\nabla p_j = p_j\, s_j$. Note that when $v = v_j$, the second term vanishes and we recover the standard log-probability ODE as one would expect.

We now examine how the inclusion of steering affects the framework. Suppose now that we wish to steer the generative process toward the tilted target
\begin{equation}~\label{eqn:tilted_target}
    h_1^\kappa\big(p_1(x,t),\dots,p_N(x,t)\big) \propto h_1\big(p_1(x,t),\dots,p_N(x,t)\big) e^{r(x)/\kappa}
\end{equation}
which is the equivalent of~\eqref{eqn:tilted_solution}, solving the optimization problem in~\eqref{eqn:KL_regularized}, where $\kappa>0$ is the steering parameter, and $r(x)$ is a reward function to be maximized. The only modification required in the algorithm is the addition of the term $\dot{a}_1(t)r(x)/\kappa$ to the function $g$ in~\eqref{eqn:g_weights_evolution}. This additional term alters the evolution of the distribution and tilts the path $p$ toward the maxima of the reward function $r$.

The steering parameter $\kappa$ controls the strength of the bias toward maximizing $r$ throughout the generation process. In the limit $\kappa\rightarrow\infty$, the additional term becomes negligible, and the original path of distributions is recovered. In contrast, as $\kappa\rightarrow 0^+$, the steering term becomes increasingly dominant and ultimately governs the generative process, as illustrated in Figure~\ref{fig:Gaussian_steering_effect}.

Building on Theorem~\ref{thm:main_theorem} and the mathematical evaluations presented in this section, we derive a practical algorithm, henceforth referred to as \emph{the GFKC algorithm}, for sampling from any admissible target function $h_1$, with optional reward-based steering. For clarity of presentation, the algorithm refers to the update equations~\eqref{eqn:A}--\eqref{eqn:G}, which are stated below.

Although Theorem~\ref{thm:main_theorem} allows for an arbitrary unifying vector field $v = V(v_1, \dots, v_N)$, Algorithm~\ref{alg:GFKC} relies on a particular choice of $V$. The particles are divided into $N$ equally sized sub-populations, and for the $i$-th sub-population we take $V$ to be the projection onto its $i$-th argument, $V(v_1, \dots, v_N) = v_i$ for $i = 1, \dots, N$,
so that each particle is transported by the velocity field of its own agent. Theorem~\ref{thm:main_theorem} is thus applied with $v = v_i$, which yields the potential $g_i$ in~\eqref{eqn:E}. In this way, each sub-population is driven by a genuine generative field and explores the high-density regions of its agent, while the terms $u_{ij}$ in~\eqref{eqn:C} correct for the discrepancy with the other agents. Since the path~\eqref{eqn:trajectory_distribution} and its normalizing constant do not depend on the choice of $V$, all sub-populations target the same normalized density, and their particles can be pooled into a single weighted population for estimation and resampling. After resampling, each particle retains the label of its agent.

The quantities appearing in Algorithm~\ref{alg:GFKC} are defined as follows.

\begin{align}
& q(x_t, t) = \dot{a}_0(t) \log h_0(\log p_1(x_t, t), \dots, \log p_n(x_t, t)) \notag \\
& \hspace{35.0mm} + \dot{a}_1(t) \log h_1(\log p_1(x_t, t), \dots, \log p_n(x_t, t)) \tag{A} \label{eqn:A}\\
& b_j(x_t, t) = a_0(t) \, \partial_{\log p_j} \log h_0(\log p_1(x_t, t), \dots, \log p_n(x_t, t)) \notag \\
& \hspace{35.0mm} + a_1(t) \, \partial_{\log p_j} \log h_1(\log p_1(x_t, t), \dots, \log p_n(x_t, t)) \tag{B} \label{eqn:B}\\
& u_{ij}(x_t, t) = \langle \nabla, v_j(x_t, t) \rangle + \langle v_j(x_t, t) - v_i(x_t, t), s_j(x_t, t) \rangle \tag{C} \label{eqn:C}\\
& \frac{d}{dt} \log p_j(x_t, t) = -u_{ij}(x_t, t) \tag{D} \label{eqn:D}\\
& g_i(x_t, t) = \langle \nabla, v_i(x_t, t) \rangle + q(x_t, t) - \sum_{j=1}^N   u_{ij}(x_t, t) b_j(x_t, t) \tag{E} \label{eqn:E}\\
& dw_t = g_i(x_t, t) dt \tag{F} \label{eqn:F}\\
& dx_t = v_i(x_t, t)dt \tag{G} \label{eqn:G}
\end{align}

We emphasize that Equations~\eqref{eqn:A}--\eqref{eqn:G} are expressed entirely in terms of the log-densities $\log p_j$ and of derivatives with respect to $\log p_j$, rather than the densities $p_j$ themselves. Accordingly, all computations in the GFKC algorithm are carried out in the log domain. This is essential in practice, since in high-dimensional spaces the densities $p_j$ may take extremely large or extremely small values, which cannot be represented in floating-point arithmetic. Moreover, in some practical scenarios, the log-densities may additionally be clipped to prevent numerical overflow, as we illustrate in Section~\ref{sec:numerical_illustrations}.

\begin{algorithm}[t!]
\caption{The generalized Feynman-Kac correctors (GFKC) algorithm}\label{alg:GFKC}
\KwIn{Number of agents $N$; velocity fields $v_1,\dots,v_N$ and scores $s_1,\dots,s_N$; initial and target functions $h_0, h_1$; schedules $a_0, a_1$; reward function $r$ and steering parameter $\kappa$; total number of particles $MN$; space dimension $d$; time step $\Delta t$; set of resampling times $\mathcal{T}$}
\vspace{\baselineskip}
\KwOut{Weighted particles $\{(x_1, w_1)\}$ approximating the target $h_1e^{r/\kappa}$}
\vspace{\baselineskip}
\KwInit{\\
Assign an equal number of particles $M\in\mathbb{N}$ to each agent $i=1,\dots,N$ \\
$t \gets 0$}
\ForEach{particle}{
  Sample $x_0 \sim \mathcal{N}(0, I_d)$\;
  \For{$j = 1$ \KwTo $N$}{
    $\log p_j(x_0, 0) \gets -\frac{1}{2}\left(\|x_0\|^2 + d \log(2\pi)\right)$\;
  }
  $w_0 \gets 0$\;
}
\vspace{\baselineskip}
\KwEvol{\\
\While{$t < 1$}{
  \For{$i = 1$ \KwTo $N$}{
    \ForEach{particle $x_t$ belonging to agent $i$}{
      Compute $q(x_t, t)$ according to~\eqref{eqn:A}\;
      \If{steering is True}{
      Estimate clean samples $\widehat{x}_1$ from the intermediate states $x_t$\;
      $q(x_t, t) \gets q(x_t, t) + \dot{a}_1 r(\widehat{x}_1)/\kappa$
    }
      \For{$j = 1$ \KwTo $N$}{
        Compute $b_j(x_t, t)$ according to~\eqref{eqn:B}\;
        Compute $u_{ij}(x_t, t)$ according to~\eqref{eqn:C}\;
        $\log p_j(x_{t+\Delta t}, t+\Delta t) \gets \log p_j(x_t, t) - \Delta t \, u_{ij}(x_t, t)$ {Eq.~\eqref{eqn:D}}
      }
      Compute $g_i(x_t, t)$ according to~\eqref{eqn:E}\;
      $w_{t+\Delta t} \gets w_t + \Delta t \, g_i(x_t, t)$ {Eq.~\eqref{eqn:F}} \\
      $x_{t+\Delta t} \gets x_t + \Delta t \, v_i(x_t, t)$ {Eq.~\eqref{eqn:G}}
    }
  }
  $t \gets t + \Delta t$\;
  \If{$t \in \mathcal{T}$}{
    Resample the particles from the multinomial distribution with weights proportional to $\{\exp(w_t)\}$\;
  }
}
}
\end{algorithm}

\section{Numerical illustrations}~\label{sec:numerical_illustrations}

We illustrate the performance of Algorithm~\ref{alg:GFKC} in two settings of inherently different nature: the first involves analytic generative models for two overlapping bivariate Gaussian distributions, while the second consists of diffusion models trained on the MNIST dataset~\cite{deng2012mnist}.

\subsection{Overlapping bivariate normal distributions}

We consider a two-dimensional synthetic setting in which all quantities of interest are available in closed form, which allows the output of the GFKC algorithm~\ref{alg:GFKC} to be compared against exact ground-truth densities. The setting consists of $N = 2$ agents, each given by a diffusion model whose data distribution is an isotropic bivariate Gaussian with unit covariance,
\begin{equation}~\label{eqn:Gaussian_analytic_distributions}
    p_1 = \mathcal{N}(\mu_1, I_2) \quad \text{and} \quad p_2 = \mathcal{N}(\mu_2, I_2),
\end{equation}
with means $\mu_1 = (3, 3)$ and $\mu_2 = (3.5, 3.5)$. Since $\|\mu_1 - \mu_2\| = \sqrt{2}/2 \approx 0.71$ is smaller than the unit standard deviation of each component, the two distributions overlap substantially, and their high-density regions are difficult to separate by sampling from either model alone.

Recall that we adopt the convention that $t = 0$ corresponds to noise and $t = 1$ to data. Each agent $j\in\{1, 2\}$ defines a path of marginals $p_j(\cdot, t) = \mathcal{N}(t \mu_j, I_2)$, $t \in [0, 1]$, which interpolates between the standard Gaussian at $t = 0$ and the data distribution $p_j$ at $t = 1$. In particular, all agents share the same initial distribution $\mathcal{N}(0, I_2)$, and the scores $s_j(x, t) = -(x - t\mu_j)$ are known exactly. Consequently, no learned approximation enters the experiment, and any discrepancy between the generated and target distributions can be attributed to the sampling procedure itself rather than to model error.

The interpolation between the initial and terminal functions follows the path~\eqref{eqn:trajectory_distribution} with linear schedules $a_0(t) = 1 - t$ and $a_1(t) = t$. As the initial function we take the geometric mean of the two models
\begin{equation*}
    h_0(p_1, p_2) = \sqrt{p_1\, p_2},
\end{equation*}
which at $t = 0$ coincides with the common standard Gaussian prior. This choice ensures that the interpolation starts from the distribution from which the particles are initialized.

For the terminal function $h_1$ we consider two cases; the first one is designed to emphasize the region in which one model disagrees with the other, while the second one is its symmetrized counterpart. In particular, the first is the asymmetric ratio
\begin{equation}~\label{eqn:ratio_target}
    h_1(p_1, p_2) = \frac{p_1}{1 + C\, p_2},
\end{equation}
for some $C>0$, which favors regions of high density under $p_1$ and low density under $p_2$. The second one is
\begin{equation}~\label{eqn:symmetric_difference_target}
    h_1(p_1, p_2) = \frac{1}{2}\left( \frac{p_1}{1 + C\, p_2} + \frac{p_2}{1 + C\, p_1} \right),
\end{equation}
which favors regions where exactly one of the two models assigns high density. In both cases we set $C = 100$, so that the denominators strongly penalize regions in which the competing model is non-negligible. This choice of $C$ produces a clear yet moderate shift of mass. Nevertheless, the results do not depend on this particular choice. Owing to the substantial overlap between $p_1$ and $p_2$, the resulting targets are concentrated on relatively small regions of the plane, which makes them challenging to sample from directly.

We consider $100{,}000$ particles per agent so that the generated samples cover the whole support of the target density, which allows an accurate comparison with the ground truth across the entire domain. We emphasize that this large population is needed only for visualizing and evaluating the full density: drawing individual samples from the target requires far fewer particles, as in the MNIST experiments of Section~\ref{subsec:mnist}.

Since the setting is low-dimensional and the models are available in closed form, we compare the generated samples directly with the ground-truth target distributions. We first present the results for the two choices of $h_1$, then report the convergence behavior of the algorithm, and finally illustrate the effect of steering.

\begin{figure}[htbp]
  \centering
  \begin{subfigure}{\textwidth}
    \centering
    \includegraphics[width=\linewidth]{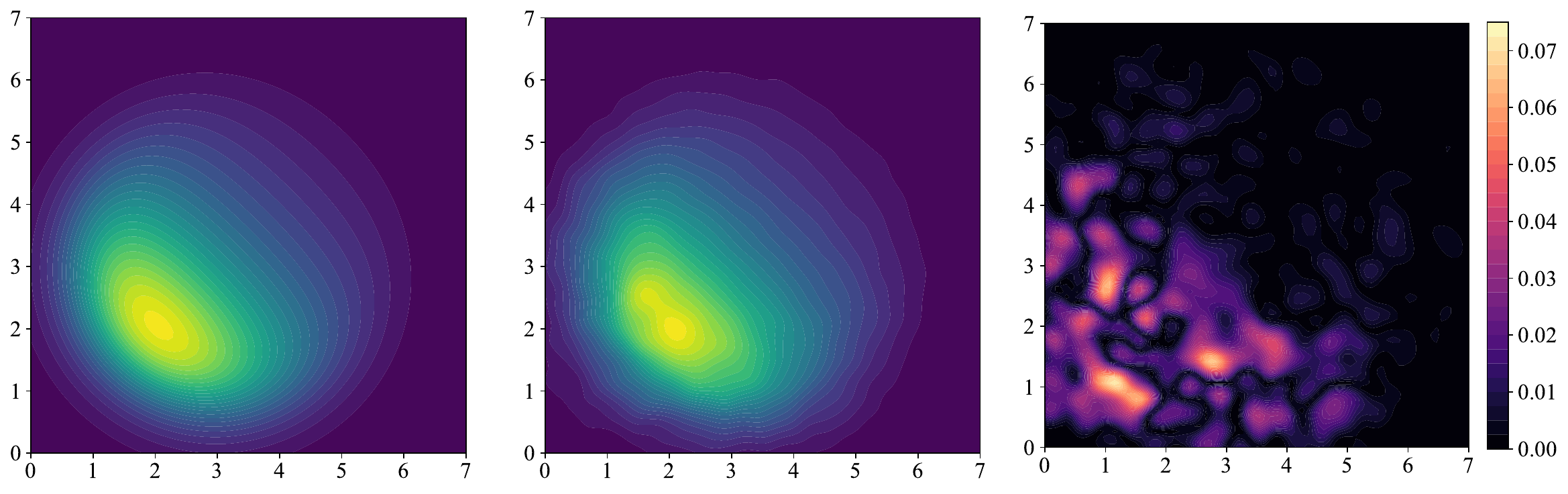}
    \label{fig:Gaussian_ratio}
  \end{subfigure}
  \vfill
  \begin{subfigure}{\textwidth}
    \centering
    \includegraphics[width=\linewidth]{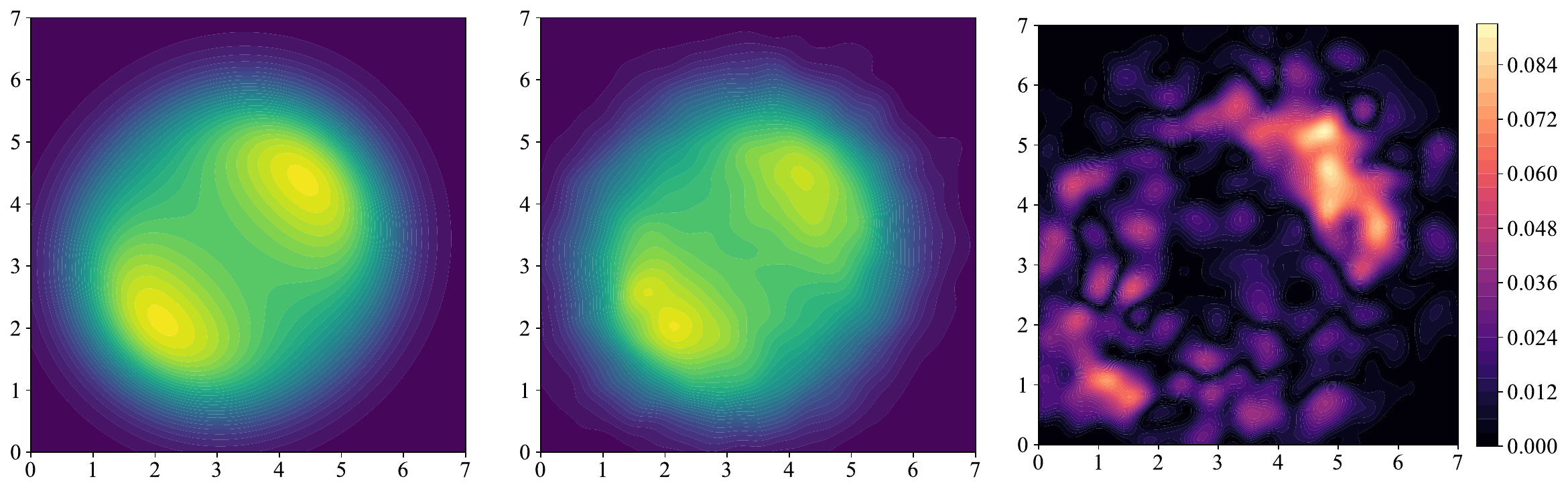}
    \label{fig:Gaussian_symmetric}
  \end{subfigure}
  \caption{Sampling with the GFKC algorithm in the bivariate Gaussian setting. Each row corresponds to one choice of the terminal function $h_1$: the asymmetric ratio of~\eqref{eqn:ratio_target} on top and its symmetrized counterpart~\eqref{eqn:symmetric_difference_target} on bottom. Left column: the ground-truth target densities, obtained by applying $h_1$ directly to the analytic densities~\eqref{eqn:Gaussian_analytic_distributions}. Middle column: the empirical densities generated by the GFKC algorithm, using $100{,}000$ particles per agent and simulating up to time $t = 1$. Right column: the relative pointwise error $|h_1 - \hat{h}_1| / \max_x h_1$ between the ground-truth and the generated densities. Note the close agreement between the left and middle columns.}
  \label{fig:Gaussians_GFKC}
\end{figure}

Figure~\ref{fig:Gaussian_convergence} reports the convergence of the GFKC algorithm for the two experiments of Figure~\ref{fig:Gaussians_GFKC} as the number of particles per agent increases. The error is measured in two ways: the Kullback--Leibler divergence $\mathrm{KL}(h_1 \,\|\, \hat{h}_1)$ between the ground-truth target $h_1$ and its empirical approximation $\hat{h}_1$ produced by the algorithm, and the $\ell_2$ distance between the two.

\begin{figure}[htbp]
    \centering
    \includegraphics[width=0.7\linewidth]{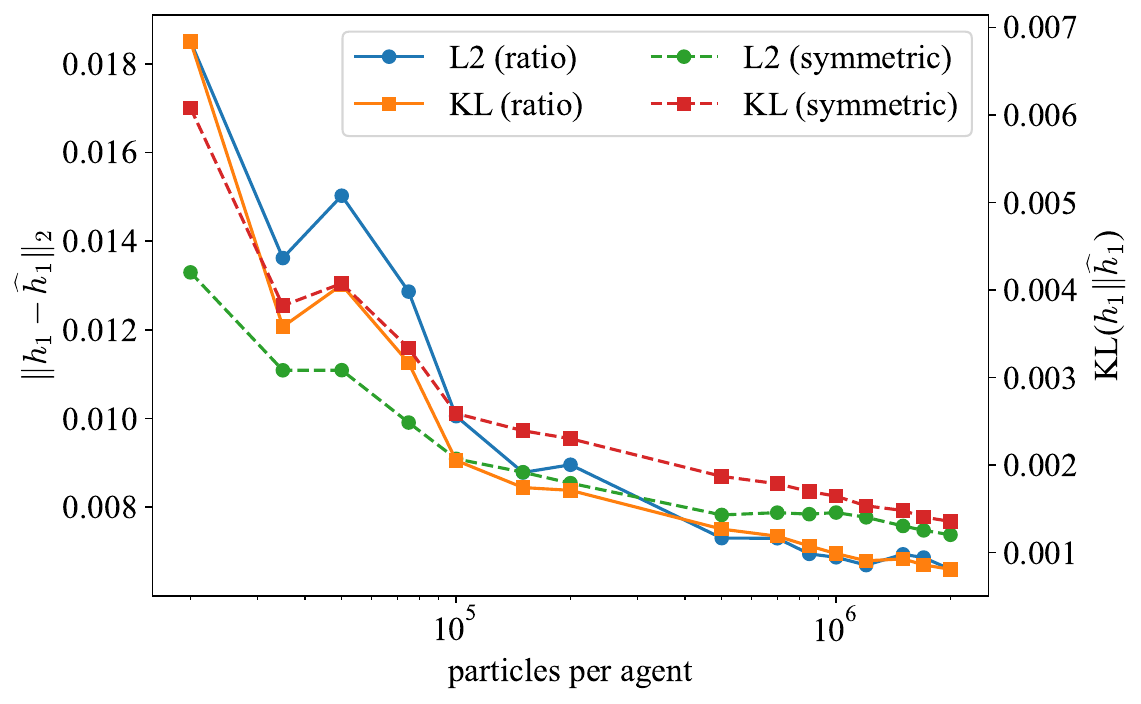}
    \caption{Convergence of the GFKC algorithm as a function of the number of particles per agent, measured by $\mathrm{KL}(h_1 \,\|\, \hat{h}_1)$ and the $\ell_2$ distance, for both choices of $h_1$.}
    \label{fig:Gaussian_convergence}
\end{figure}

Finally, for the steered variant of the algorithm, we use the reward function $r(x) = -\|x\|_2$, so that the steered target $h_1\, e^{r/\kappa}$ of~\eqref{eqn:tilted_target} tilts the terminal distribution toward points closer to the origin, with the strength of this tilt controlled by the steering parameter $\kappa > 0$.

To examine the steering effect, we consider the symmetric terminal function $h_1$ defined in~\eqref{eqn:symmetric_difference_target}, together with the prescribed reward function and steering parameter $\kappa = 1$. This choice creates a competition between two objectives: sampling from the symmetric target, which is bimodal as shown in Figure~\ref{fig:Gaussians_GFKC}, and favoring samples that lie close to the origin.

Since the mode associated with the first agent, centered around $\mu_1 = (3,3)$, is closer to the origin, the steered target naturally assigns more mass to this mode at the expense of the other. Figure~\ref{fig:Gaussian_steering_kappa_1} illustrates the effect of steering for $\kappa = 1$ and compares the result with the unsteered target $h_1$. Figure~\ref{fig:Gaussian_steering_effect} then shows the steered distributions for several values of $\kappa$, confirming that the steering effect strengthens as $\kappa$ decreases, in agreement with the theory.

\begin{figure}[htbp]
    \centering
    \includegraphics[width=1\linewidth]{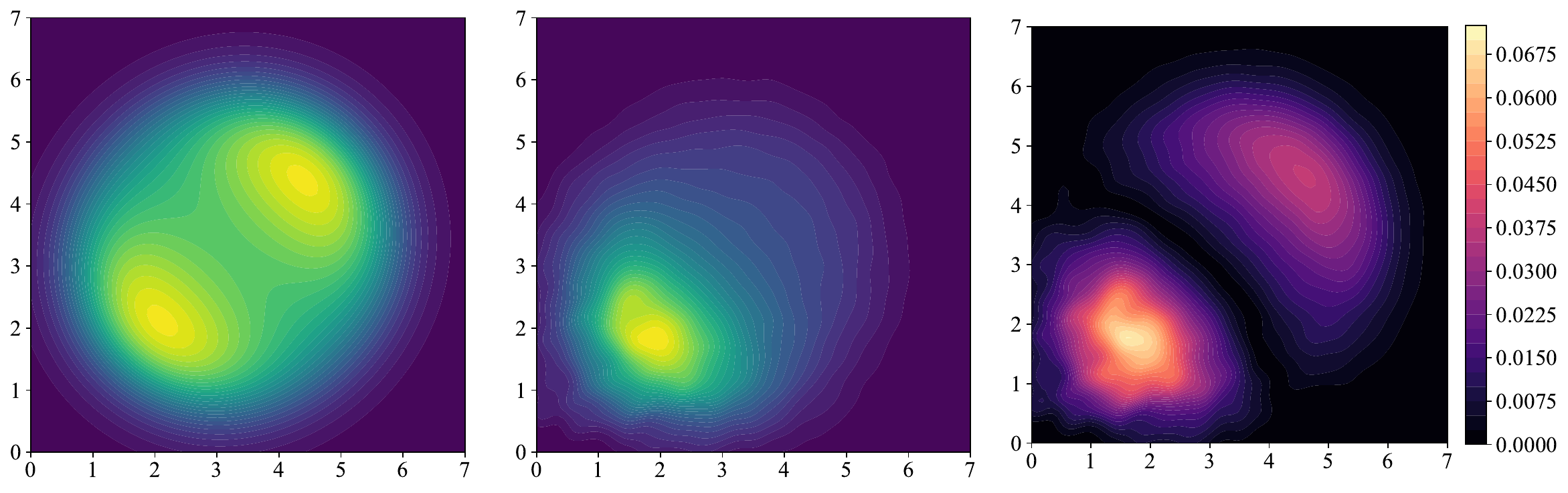}
    \caption{Effect of steering on the symmetric target $h_1$ of~\eqref{eqn:symmetric_difference_target}, with reward $r(x) = -\|x\|$ and $\kappa = 1$. Left: the analytic unsteered target density $h_1$. Middle: the empirical density generated by the GFKC algorithm with steering, using $100{,}000$ particles per agent and simulating up to time $t = 1$. Right: the pointwise difference between the steered empirical density and the unsteered target. Note the shift of mass toward the mode closer to the origin.}
    \label{fig:Gaussian_steering_kappa_1}
\end{figure}

During the steered GFKC algorithm, the reward $r$ must be evaluated at the terminal state of each particle, which is not available at intermediate times. We therefore estimate the terminal sample $\widehat{x}_1$ from the current state $x_t$ as follows: starting from $x_t$, we continue the denoising process up to time $t = 1$ using the diffusion model of the agent to which the particle belongs, and take the resulting endpoint as $\widehat{x}_1$. This projection is performed independently of the algorithm, in the sense that the particle itself, its log-densities $\log p_j$ and its weight are left unchanged. The reward $r(\widehat{x}_1)$ is then incorporated into~\eqref{eqn:A} at the equispaced steering times $t_k = 0.02\, k$ for $k = 0, \dots, 50$, which form a dense grid of the time interval and are evenly distributed across the generation process.

\begin{figure}[htbp]
    \centering
    \includegraphics[width=1\linewidth]{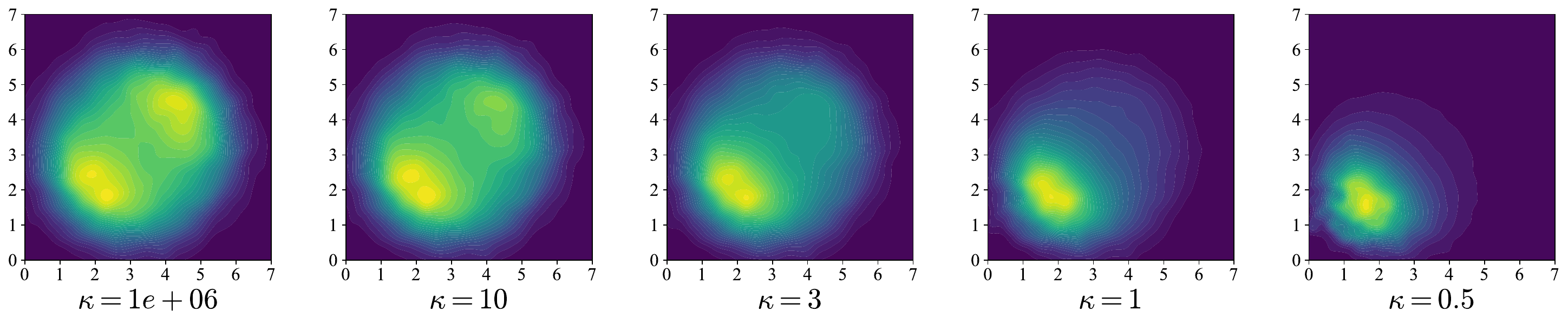}
    \caption{Effect of the steering parameter $\kappa$ on the symmetric target $h_1$ of~\eqref{eqn:symmetric_difference_target}, with reward $r(x) = -\|x\|$. Each panel shows the density generated by the GFKC algorithm for a different value of $\kappa$. As $\kappa$ decreases, the steering effect becomes increasingly pronounced, and mass is progressively transferred toward the mode closer to the origin. Conversely, for very large $\kappa$, here $\kappa = 10^6$, the steering term is negligible and the generated density coincides with the unsteered target $h_1$, as expected.}
    \label{fig:Gaussian_steering_effect}
\end{figure}

\subsection{Experimenting with the MNIST dataset}\label{subsec:mnist}

We now move from the analytic Gaussian setting to a high-dimensional setting in which the agents are learned diffusion models. Whereas in the Gaussian experiments all quantities were available in closed form, here each agent is a diffusion model trained on a subset of the MNIST dataset~\cite{deng2012mnist}, the data space has dimension $d = 28 \times 28 = 784$, and neither the scores nor the densities $p_j$ for $j=1,2$ are known analytically. The two agents are trained on two sets of digit classes that share exactly one class. The shared class thus plays the role of the region where the agents agree, while the two remaining classes, each seen by only one agent, correspond to the regions in which they disagree.

To make the distinction between the shared and the agent-specific classes as clear as possible, we select the classes so that the three digits involved are pairwise far apart. To this end, we view each image as a probability measure on the uniform $28 \times 28$ grid in $[0,1]^2$, obtained by normalizing its pixel intensities, and compare images through the Wasserstein-1 distance $W_1$ with Euclidean ground cost, computed exactly by solving the corresponding optimal transport problem. Distances between classes are measured in two complementary ways: as the $W_1$ distance between the class-mean images, and as the average $W_1$ distance over $1{,}000$ randomly drawn pairs of images from the two classes. Among all triples of digits $\{a, b, c\}$, we then select the one maximizing the smallest of the three pairwise distances, and assign to the shared class $b$ the digit whose distances to the other two are largest. This procedure leads to the choice
\begin{equation*}
    \mathcal{S}_1 = \{1, 6\}, \qquad \mathcal{S}_2 = \{1, 7\}
\end{equation*}
so that the digit $1$ is shared by both agents, while the digits $6$ and $7$ are seen exclusively by the first and second agent, respectively.

Each agent is trained on the images of its two classes from the MNIST training and test sets, with pixel intensities rescaled to $[-1, 1]$ and images flattened into vectors in $\mathbb{R}^{784}$. Since the classes of MNIST are not equally represented, each training batch is drawn with equal probability from the two classes of the agent, so that the resulting model assigns comparable mass to both digits. As in the Gaussian setting, we adopt the convention that $t = 0$ corresponds to noise and $t = 1$ to data. The forward noising process is the variance-preserving diffusion with a linear noise schedule, $\beta(u) = \beta_0 + (\beta_1 - \beta_0)\, u$ with $\beta_0 = 0.1$ and $\beta_1 = 20$, where $u = 1 - t$ denotes the forward time. The corresponding marginals are $x_u = \alpha(u)\, x_0 + \sigma(u)\, \varepsilon$, with $\varepsilon \sim \mathcal{N}(0, I_d)$, $\log \alpha(u) = -\tfrac{1}{2}\beta_0 u - \tfrac{1}{4}(\beta_1 - \beta_0) u^2$ and $\sigma(u) = \sqrt{1 - \alpha(u)^2}$. Each network is trained with the standard noise-prediction objective, from which the score is recovered as $s_j(x, t) = -\hat{\varepsilon}_j(x, t)/\sigma(t)$.

Both agents share the same architecture, a fully connected network acting directly on the flattened images. The diffusion time is encoded through a $64$-dimensional sinusoidal embedding, followed by a linear layer and a SiLU activation. The embedding is concatenated with the input image and passed through three hidden layers of width $2{,}048$ with SiLU activations, followed by a linear output layer of dimension $784$. Each agent is trained for $300{,}000$ iterations using the Adam optimizer with learning rate $2 \times 10^{-4}$ and batch size $512$, with the diffusion time sampled uniformly from $[10^{-3}, 1 - 10^{-3}]$.

Unlike in the Gaussian setting, the target $h_1$ cannot be evaluated in closed form. The results in this section are therefore assessed qualitatively, through the generated samples and their weights, rather than by comparison with a ground-truth density. Several adjustments to the GFKC algorithm are required in this setting:
\begin{enumerate}
  \item \textbf{Divergence estimation.} In the Gaussian experiments, the divergences $\langle \nabla, v_j \rangle$ appearing in~\eqref{eqn:C} and~\eqref{eqn:E} were computed exactly. In dimension $d = 784$, this is prohibitively expensive, as it requires $d$ backward passes per agent and per time step. We therefore replace it with the Hutchinson trace estimator~\cite{hutchinson1989stochastic}
  \begin{equation*}
      \langle \nabla, v_j \rangle \approx \frac{1}{L} \sum_{\ell=1}^{L} \eta_\ell^\top \nabla v_j\, \eta_\ell,
  \end{equation*}
  where $\nabla v_j$ is the Jacobian matrix of $v_j$ with respect to $x$, and $L = 10$ Gaussian probe vectors $\eta_\ell \sim \mathcal{N}(0, I_d)$.

  \item \textbf{Stochastic sampler.} The particles are evolved with a stochastic sampler, namely an Euler--Maruyama discretization with $100$ time steps and stochasticity parameter $\xi = 4$.

  \item \textbf{Double precision.} To mitigate the accumulation of round-off errors, all computations during sampling are carried out in double precision.

  \item \textbf{Clipping of the log-densities.} The accumulated log-densities $\log p_j$ are clipped from above at every time step, which prevents overflow in the evaluation of $h_0$, $h_1$ and their derivatives.

  \item \textbf{Recentering of the log-weights.} The log-weights are recentered by their mean at every time step, which prevents them from attaining extreme values. This recentering is valid since the normalized weights are obtained by applying a softmax to the log-weights, and the softmax is invariant under the addition of a constant to all of its elements.
\end{enumerate}

Before applying the GFKC algorithm, we first examine the two agents in isolation. Figure~\ref{fig:MNIST_agents_samples} shows images generated independently by each agent using its own diffusion sampler, without any reweighting or resampling. As expected, the first agent produces images of the digits $1$ and $6$, while the second produces images of the digits $1$ and $7$. These samples serve as a reference for the results of the GFKC algorithm presented below.

\begin{figure}[htbp]
    \centering
    \begin{subfigure}[b]{0.48\textwidth}
        \centering
        \includegraphics[width=\textwidth]{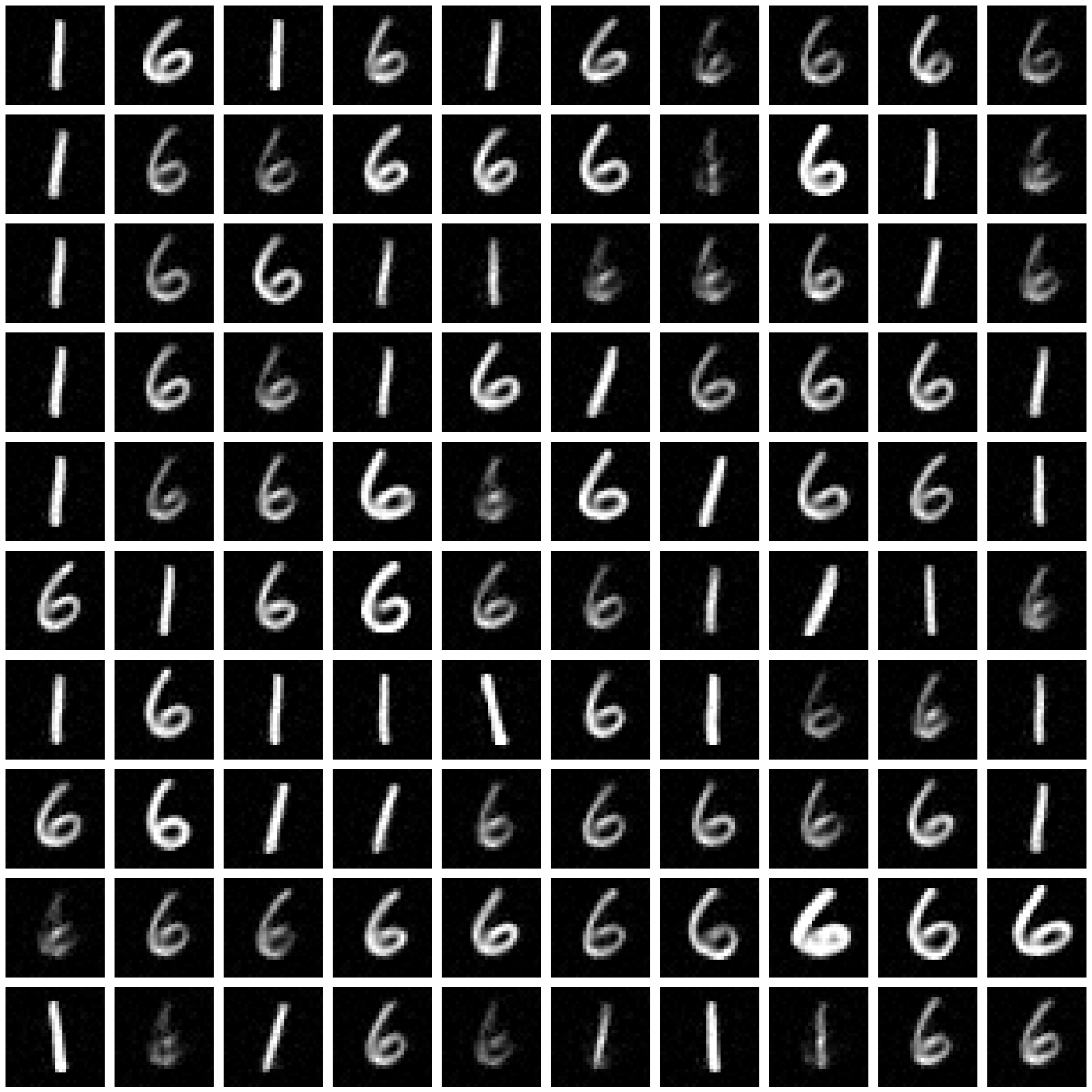}
        \label{fig:left1}
    \end{subfigure}
    \hfill
    \begin{subfigure}[b]{0.48\textwidth}
        \centering
        \includegraphics[width=\textwidth]{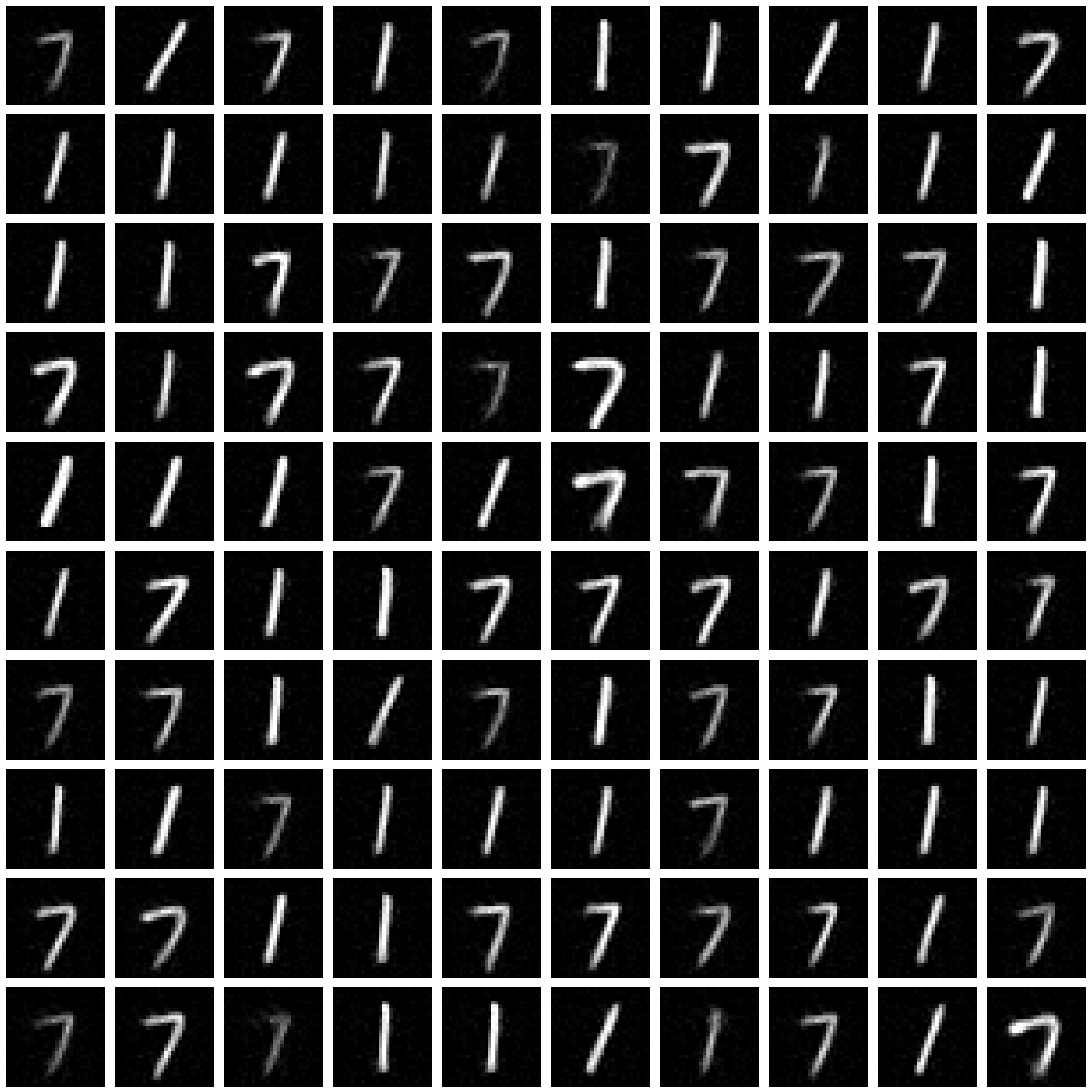}
        \label{fig:right1}
    \end{subfigure}
    \caption{Samples generated independently by each agent, without the GFKC algorithm. Left: $100$ images generated by the first agent, trained on the digits $\mathcal{S}_1$. Right: $100$ images generated by the second agent, trained on the digits $\mathcal{S}_2$.}
    \label{fig:MNIST_agents_samples}
\end{figure}

To demonstrate the GFKC algorithm in this setting, we consider the ratio target function $h_1(p_1,p_2)$ with~\eqref{eqn:ratio_target}, with $C = 100$, which favors regions of high density under the first agent and low density under the second, and hence the digit $\mathcal{S}_1\setminus\mathcal{S}_2=\{6\}$, which only the first agent has seen. In a parallel experiment, we use the same function with its arguments swapped, $h_1(p_2, p_1)$, which by symmetry favors the digit $\mathcal{S}_2\setminus\mathcal{S}_1=\{7\}$, exclusive to the second agent. Figure~\ref{fig:MNIST_ratio} shows the results of these two experiments, while Figure~\ref{fig:MNIST_weights_histograms} shows the histograms of the resulting log-weights.

\begin{figure}[htbp]
    \centering
    \begin{subfigure}[b]{0.48\textwidth}
        \centering
        \includegraphics[width=\textwidth]{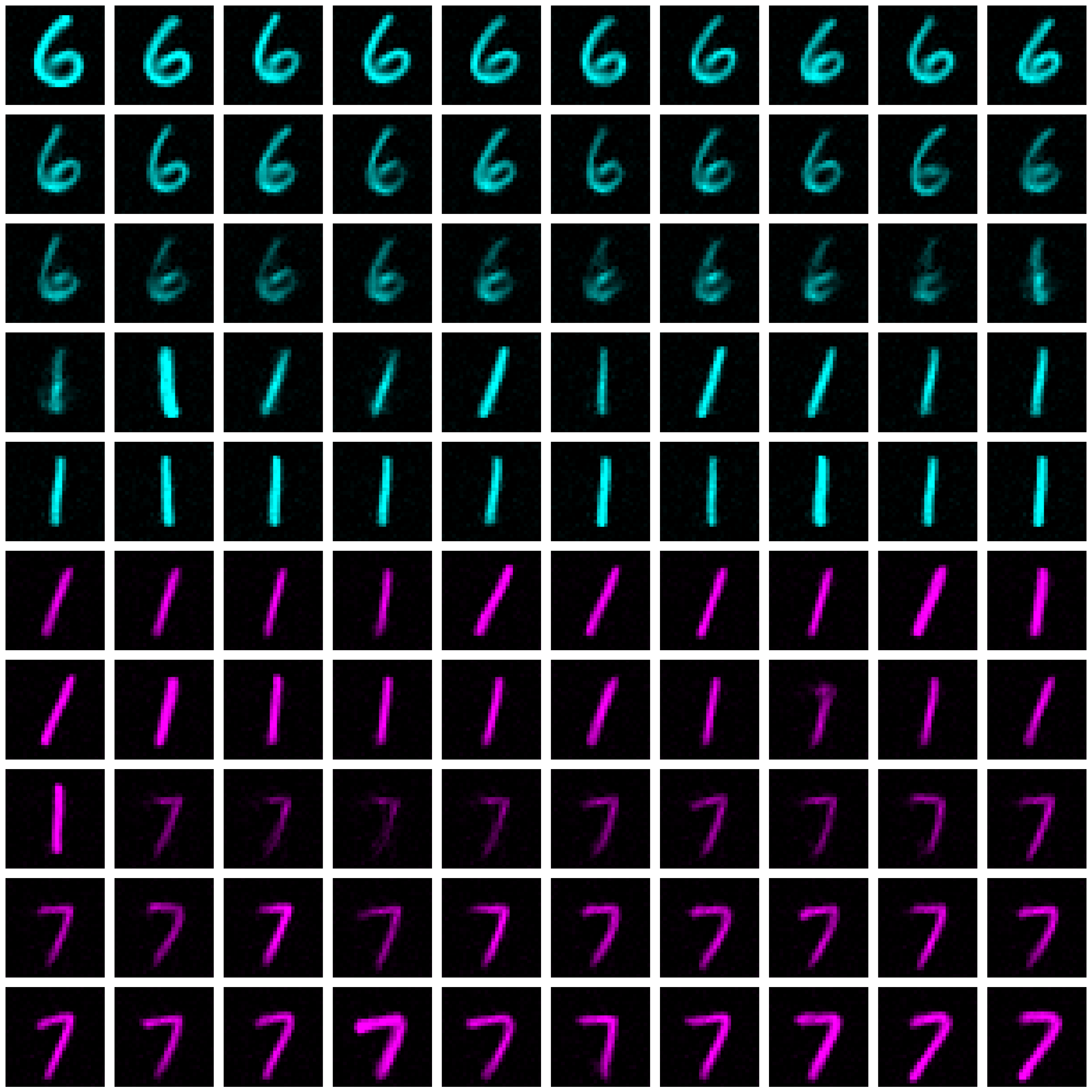}
    \end{subfigure}
    \hfill
    \begin{subfigure}[b]{0.48\textwidth}
        \centering
        \includegraphics[width=\textwidth]{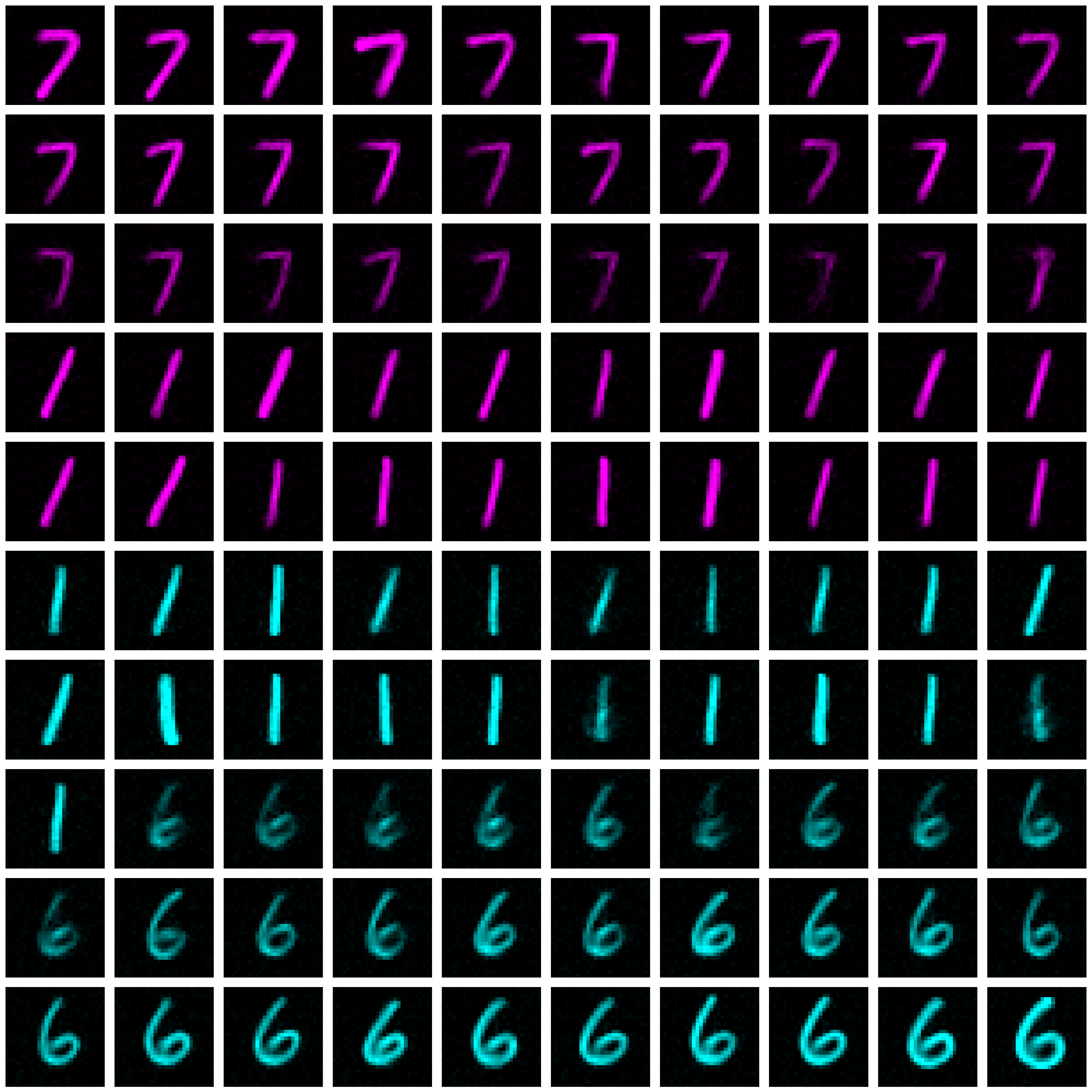}
    \end{subfigure}
    \caption{Results of the GFKC algorithm with the ratio target~\eqref{eqn:ratio_target}. Left: $h_1(p_1, p_2)$. Right: $h_1(p_2, p_1)$. Each panel displays the $100$ generated particles, $50$ per agent, sorted by their final log-weights in decreasing order, from the top-left to the bottom-right corner. Particles from the first agent (digits $1$ and $6$) are shown in cyan, and particles from the second agent (digits $1$ and $7$) in magenta. In the left panel, the highest weights are assigned to the digit $6$, exclusive to the first agent, followed by the shared digit $1$, while the digit $7$ receives the lowest weights. The right panel exhibits the symmetric behavior, with the digit $7$ ranked highest and the digit $6$ lowest.}
    \label{fig:MNIST_ratio}
\end{figure}

\begin{figure}[htbp]
    \centering
    \begin{subfigure}[b]{0.48\textwidth}
        \centering
        \includegraphics[width=\textwidth]{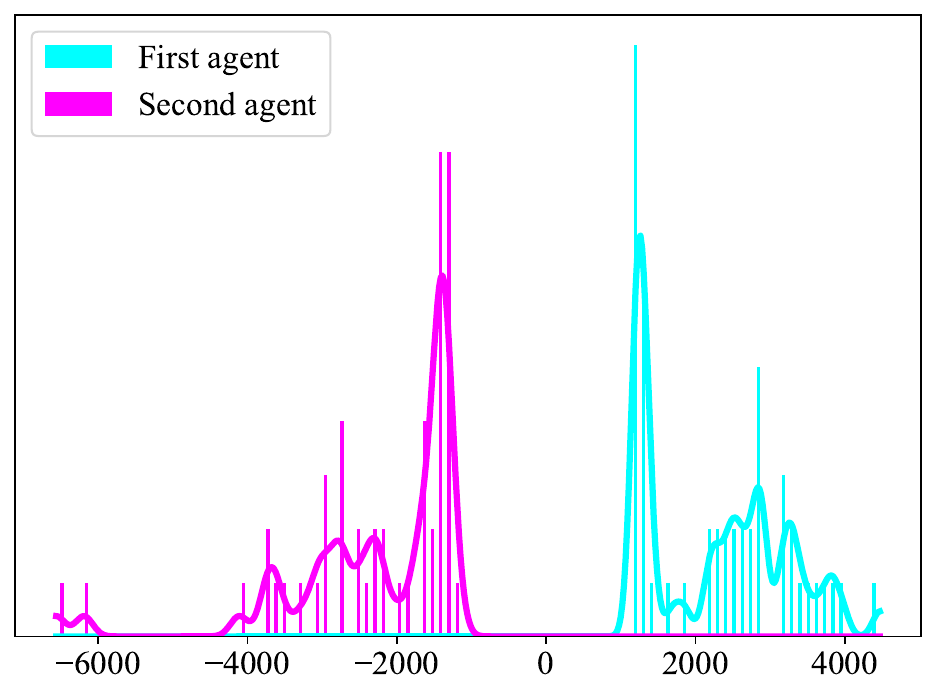}
    \end{subfigure}
    \hfill
    \begin{subfigure}[b]{0.48\textwidth}
        \centering
        \includegraphics[width=\textwidth]{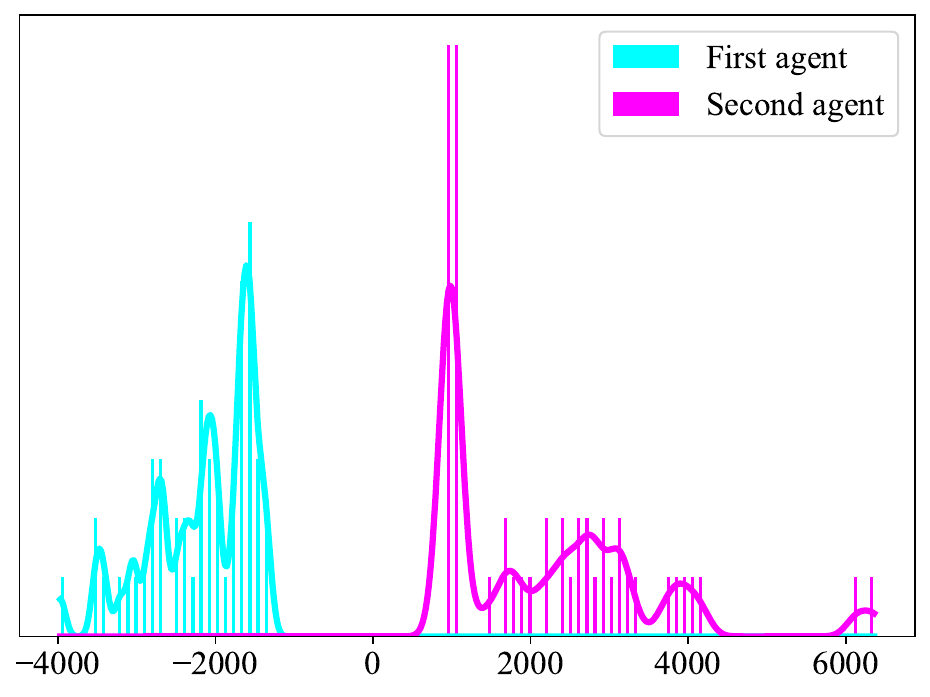}
    \end{subfigure}
    \caption{Histograms of the final log-weights of the particles generated by the GFKC algorithm, with the ratio target $h_1(p_1, p_2)$ (left) and the swapped target $h_1(p_2, p_1)$ (right). Colors correspond to the agents as in Figure~\ref{fig:MNIST_ratio}: cyan for the first agent (digits $1$ and $6$) and magenta for the second agent (digits $1$ and $7$). Solid curves show kernel density estimates of each histogram. In both cases, the log-weights of the two agents are clearly separated, with the agent holding the favored exclusive digit receiving the higher weights. Since the histograms are over log-weights, differences of several thousands translate into ratios of order $e^{1000}$ or more between the weights themselves, so that, after normalization, essentially all the mass is carried by the particles of the favored agent.}
    \label{fig:MNIST_weights_histograms}
\end{figure}

We next consider a target function of a different nature, namely $h_1 = p_1 + p_2$. Since the digit $1$ is assigned high density by both agents, $\mathcal{S}_1\cap\mathcal{S}_2=\{1\}$, this target places the largest mass on the shared digit, in contrast to the ratio target, which suppressed it. To further emphasize this effect, we sharpen the target by raising it to a power,
\begin{equation}~\label{eqn:MNIST_sum_target_sharpenned}
    h_1 = (p_1 + p_2)^{\gamma}, \qquad \gamma \geq 1,
\end{equation}
and examine how the distribution of the log-weights evolves as $\gamma$ increases.

Figure~\ref{fig:MNIST_sum} shows the results of the GFKC algorithm for $\gamma = 2$, while Figure~\ref{fig:MNIST_sum_sharpenning} illustrates how the histogram of the log-weights evolves for increasing values of $\gamma$. As $\gamma$ grows, the log-weights spread over an increasingly wide range, and the peak associated with the shared digit $1$ shifts toward larger values. This behavior follows directly from the structure of the target: since $\log h_1 = \gamma \log(p_1 + p_2)$, the contribution of $h_1$ to the weight increments in~\eqref{eqn:A}, \eqref{eqn:B} and~\eqref{eqn:E} is scaled by $\gamma$. Consequently, differences in log-weight between particles are amplified roughly in proportion to $\gamma$, and after normalization the weights concentrate increasingly on the particles of highest density under $p_1 + p_2$.

\begin{figure}[htbp]
    \centering
    \includegraphics[width=0.52\linewidth]{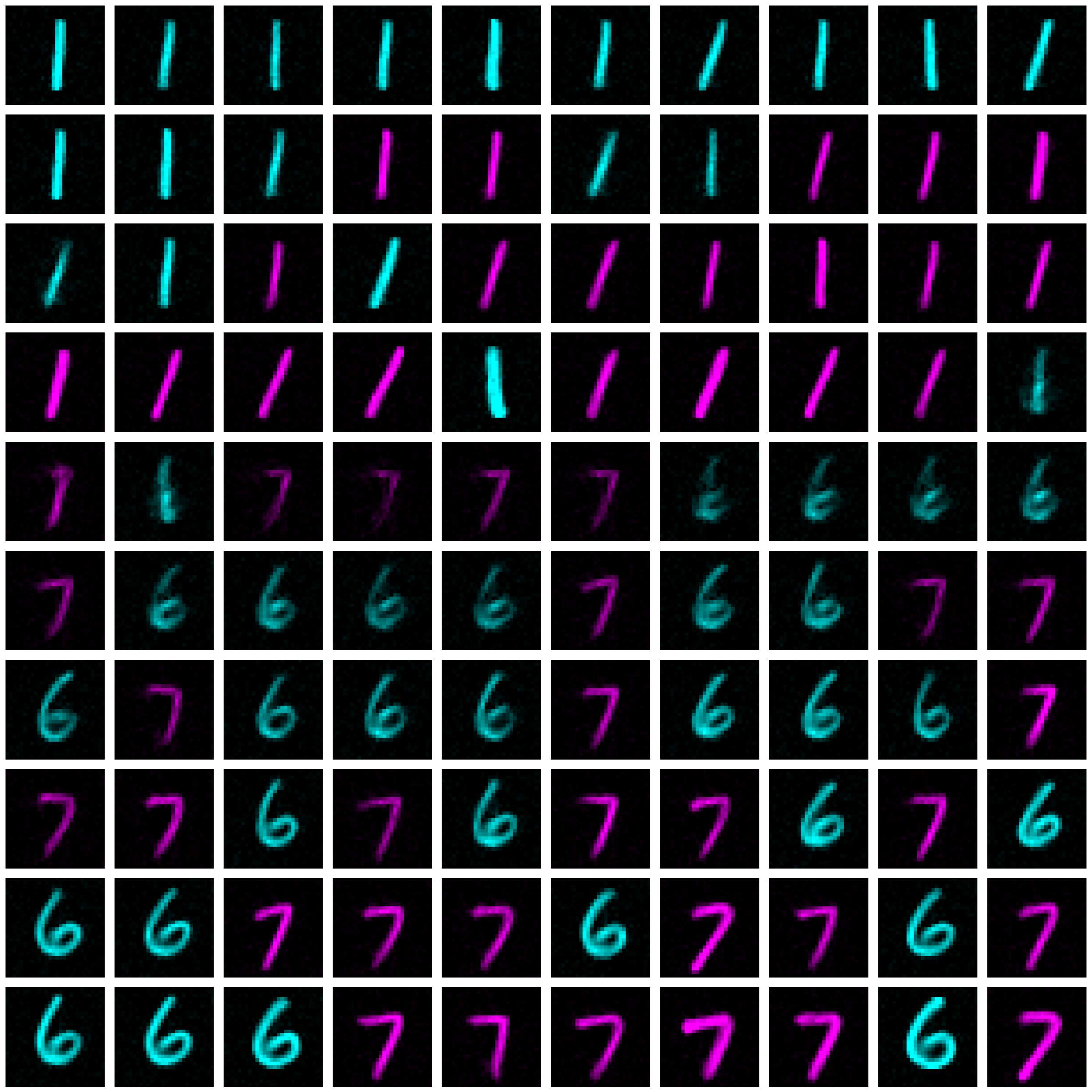}
    \caption{Results of the GFKC algorithm with the annealed sum target $h_1 = (p_1 + p_2)^2$ (i.e., $\gamma = 2$). The $100$ generated particles, $50$ per agent, are sorted by their final log-weights in decreasing order, from the top-left to the bottom-right corner, and colored by agent as in Figure~\ref{fig:MNIST_ratio}: cyan for the first agent (digits $1$ and $6$) and magenta for the second agent (digits $1$ and $7$). The highest weights are assigned to the shared digit $1$, generated by both agents, while the exclusive digits $6$ and $7$ are interleaved among the lower weights.}
    \label{fig:MNIST_sum}
\end{figure}

\begin{figure}[htbp]
    \centering
    \includegraphics[width=0.7\linewidth]{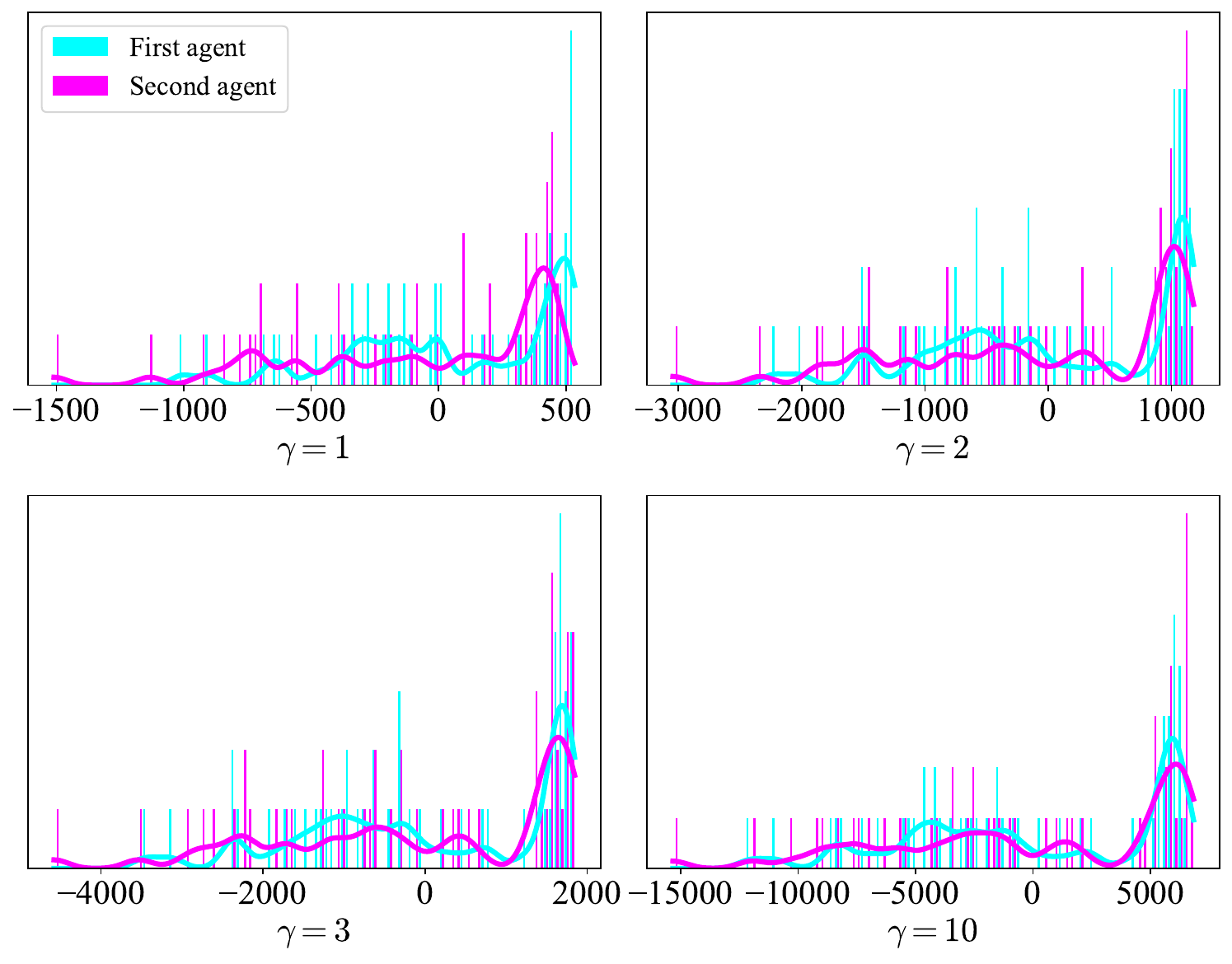}
    \caption{Histograms of the final log-weights for the sharpened sum target $h_1 = (p_1 + p_2)^{\gamma}$, with $\gamma \in \{1, 2, 3, 10\}$, colored by agent as in Figure~\ref{fig:MNIST_ratio}. As $\gamma$ increases, the range of the log-weights widens and the peak associated with the shared digit $1$ shifts toward larger values, so that the separation between the highest-weighted particles and the rest becomes increasingly pronounced. In contrast to the ratio target, the log-weights of the two agents overlap throughout, and the highest-weighted particles originate sometimes from the first agent and sometimes from the second, reflecting that both agents contribute to the shared digit.}
    \label{fig:MNIST_sum_sharpenning}
\end{figure}

We now demonstrate the effect of steering in the MNIST setting. To this end, we use the reward function
\begin{equation}~\label{eqn:MNIST_reward_function}
    r(x) = -\big\| \mathcal{R}(x) - \boxed{9}\;\big\|_2,
\end{equation}
where $\mathcal{R}$ denotes the rotation of an image by $\pi$ radians, and $\boxed{9}$ is the mean image of the digit $9$ over the MNIST training set, with pixel intensities rescaled to $[-1, 1]$ as for the training data. As in the Gaussian setting, the terminal sample $\widehat{x}_1$ entering the steered term in~\eqref{eqn:A} is estimated by projecting each particle to time $t = 1$ with the diffusion model of its agent, at the same steering times $t_k = 0.02\, k$ for $k = 0, \dots, 50$. As the terminal function, we take the sum target $h_1 = p_1 + p_2$, i.e., $\gamma = 1$, which, as shown above, favors the shared digit $1$.

Since a $6$ rotated by $\pi$ radians closely resembles a $9$, the reward function $r$ of~\eqref{eqn:MNIST_reward_function} favors images of the digit $6$, in competition with the target $h_1$, which favors the digit $1$. We therefore expect that, for a sufficiently small steering parameter $\kappa$, the highest-weighted particles will no longer depict the digit $1$, but rather the digit $6$.

We begin by illustrating the steering effect for a small value of $\kappa=10^{-6}$, corresponding to strong steering. As shown in Figure~\ref{fig:MNIST_sum_steered}, although the target $h_1$ favors the digit $1$, the steering term dominates, and the highest-weighted particles depict the digit $6$. As the weights decrease, the generated $6$’s become increasingly deformed and their rotations bear less resemblance to $\boxed{9}$. Figure~\ref{fig:MNIST_best_vs_mean9} compares the highest-weighted sample, rotated by $\pi$ radians, with $\boxed{9}$.

\begin{figure}[htbp]
    \centering
    \includegraphics[width=0.52\linewidth]{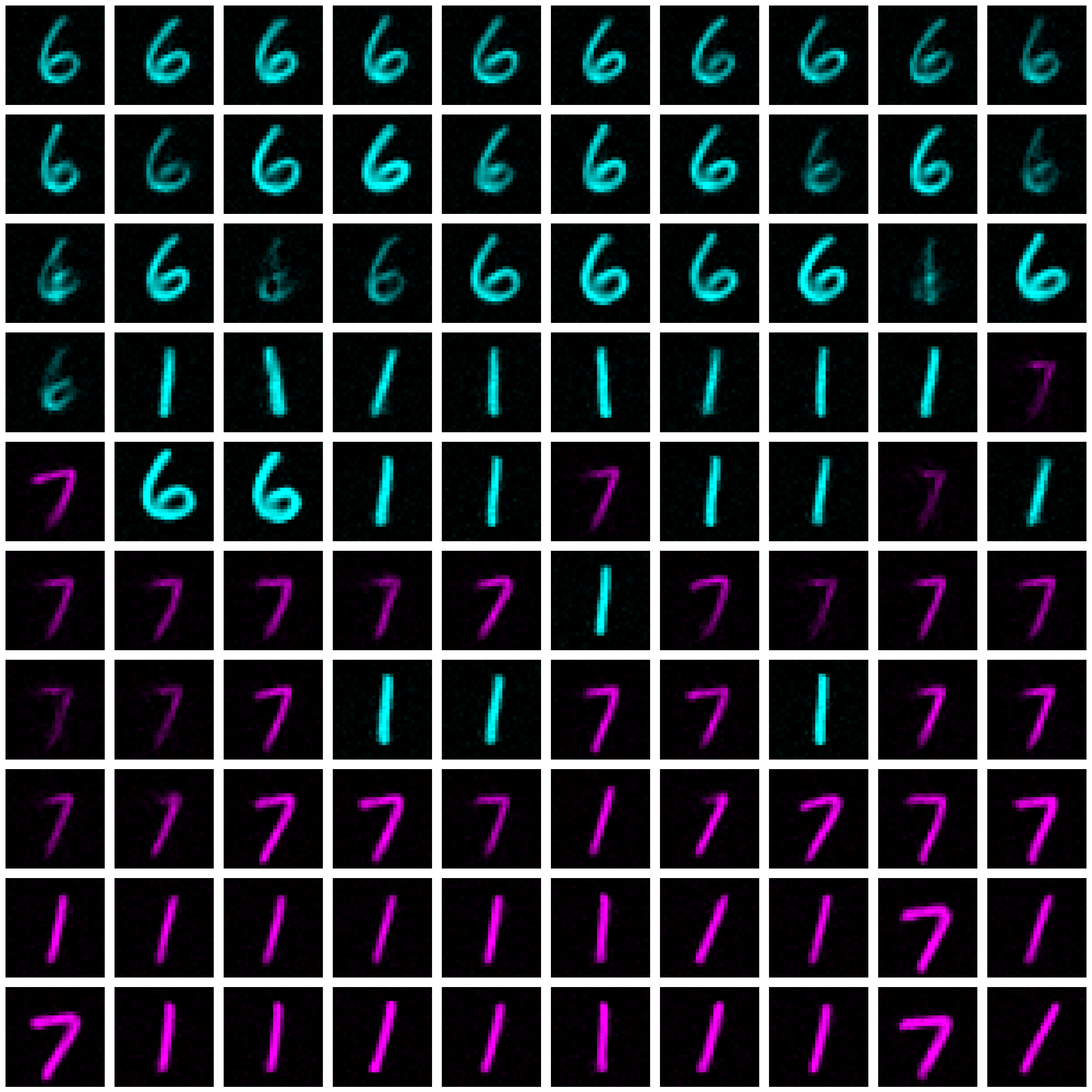}
    \caption{Results of the steered GFKC algorithm with the sum target $h_1$ of~\eqref{eqn:MNIST_sum_target_sharpenned} with $\gamma = 1$, the reward $r$ of~\eqref{eqn:MNIST_reward_function} and a small steering parameter $\kappa=10^{-6}$. The $100$ generated particles are sorted by their final log-weights in decreasing order, from the top-left to the bottom-right corner, and colored by agent as in Figure~\ref{fig:MNIST_ratio}. Although $h_1$ favors the shared digit $1$, the steering term dominates, and the highest weights are assigned to images of the digit $6$, all generated by the first agent. As the weights decrease, the generated $6$'s become increasingly deformed, and their $\pi$ radians rotations bear progressively less resemblance to the mean image $\boxed{9}$. They are followed by the digit $1$, while the particles of the second agent, which cannot generate the digit $6$, receive the lowest weights.}
    \label{fig:MNIST_sum_steered}
\end{figure}

\begin{figure}[htbp]
    \centering
    \includegraphics[width=0.52\linewidth]{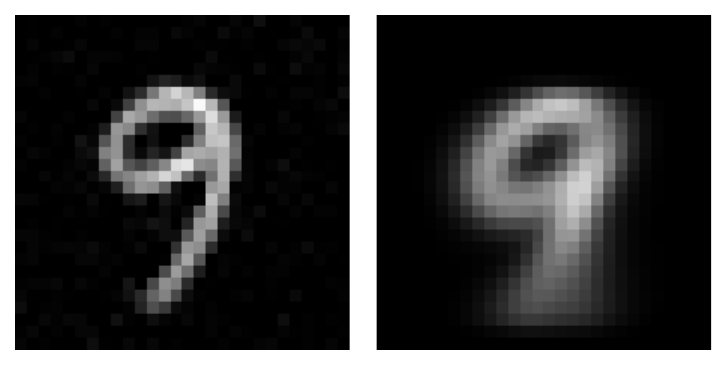}
    \caption{Left: the highest-weighted particle of the steered GFKC algorithm, a digit $6$ generated by the first agent, rotated by $\pi$ radians. Right: the mean image $\boxed{9}$ of the digit $9$ over the MNIST training set, which defines the reward $r$ of~\eqref{eqn:MNIST_reward_function}. The rotated sample closely resembles the mean image, illustrating how the steering guides the generation toward high values of the reward.}
    \label{fig:MNIST_best_vs_mean9}
\end{figure}

To further examine the relation between the reward and the weights, Figure~\ref{fig:MNIST_weights_reward_scatter} displays the reward values of the $100$ particles of Figure~\ref{fig:MNIST_sum_steered} against their final log-weights. The cluster in the top-right corner, combining the highest rewards with the highest log-weights, corresponds to the images of the digit $6$ generated by the first agent.

\begin{figure}[htbp]
    \centering
    \includegraphics[width=0.6\linewidth]{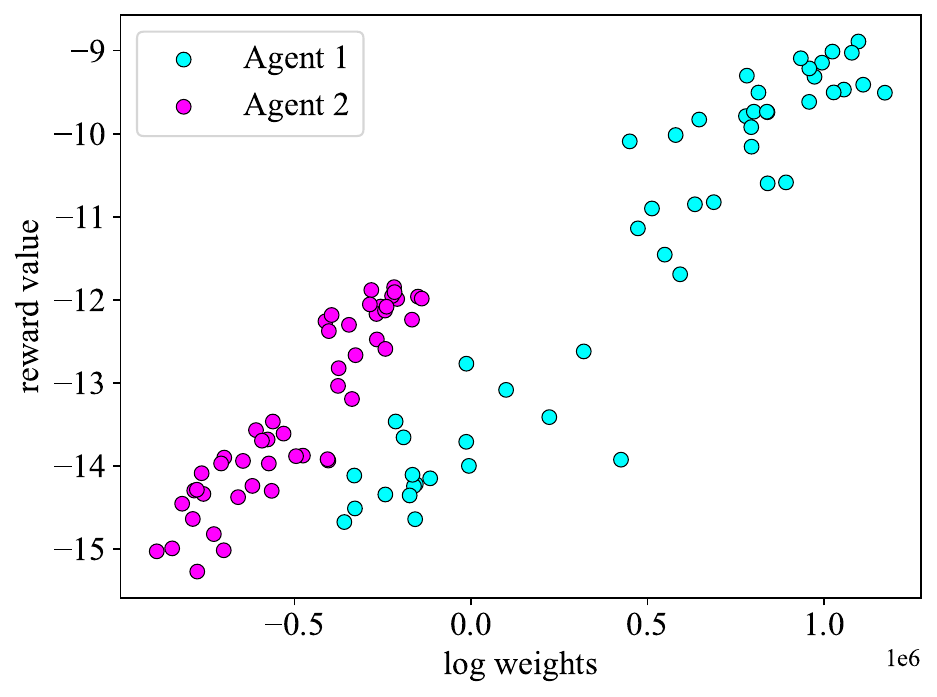}
    \caption{The reward values against the final log-weights for the $100$ particles of Figure~\ref{fig:MNIST_sum_steered}, colored by agent. The two quantities are strongly positively correlated, reflecting the dominance of the steering term for the small $\kappa=10^{-6}$. The cluster of roughly $30$ points in the top-right corner, combining the highest rewards with the highest log-weights, corresponds to the images of the digit $6$, all generated by the first agent. The particles of the second agent, which cannot generate the digit $6$, are concentrated in the lower-left region.}
    \label{fig:MNIST_weights_reward_scatter}
\end{figure}

We conclude this experiment by illustrating the steering effect for different values of $\kappa$. Analogously to Figure~\ref{fig:Gaussian_steering_effect} in the Gaussian setting, Figure~\ref{fig:MNIST_steering_effect} shows the effect of steering as $\kappa$ varies. However, since the density cannot be visualized in high dimension, we display, for each value of $\kappa$, the image with the highest log-weight. The values of $\kappa$ were chosen deliberately to capture the transition between the two competing effects, namely sampling the digit $1$, favored by the target $h_1$, and steering toward the digit $6$, favored by the reward. We observe that this transition occurs around the critical value $\kappa \approx 0.0015$.

\begin{figure}[htbp]
    \centering
    \includegraphics[width=\linewidth]{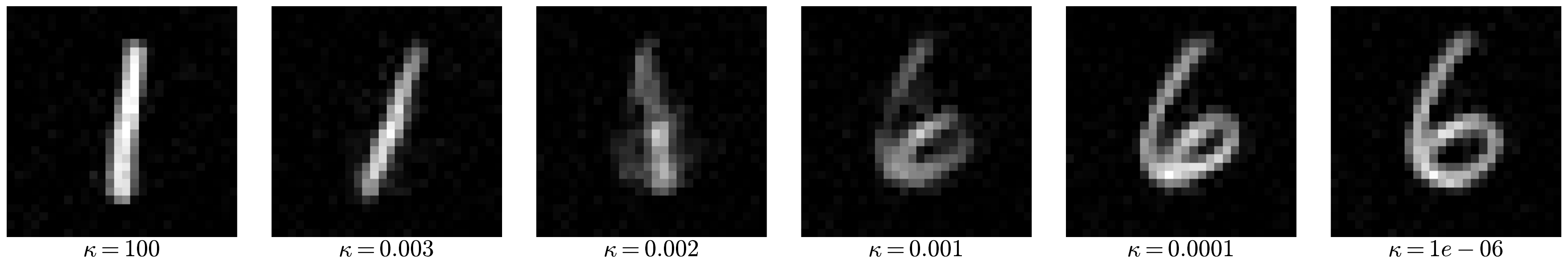}
    \caption{Highest-weighted image of the steered GFKC algorithm for decreasing values of $\kappa$. For large $\kappa$, the top image depicts the digit $1$, favored by $h_1$, while for small $\kappa$ the steering dominates and the digit $6$ emerges. The critical shift occurs between the third and fourth panels, at around $\kappa \approx 0.0015$.}
    \label{fig:MNIST_steering_effect}
\end{figure}

\section{Conclusion}\label{sec:conclusion}

In this work, we introduced the generalized Feynman--Kac correctors, a principled framework for sampling from arbitrary target functions of several pretrained diffusion models, with optional reward-based steering. Our main result in Theorem~\ref{thm:main_theorem} provides, in closed form, the Feynman--Kac potential of the geometric interpolation between an arbitrary initial and terminal function of the agents, and, combined with a generalized log-probability ODE, leads to Algorithm~\ref{alg:GFKC}, which is a practical algorithm that operates entirely in the log domain and corrects the generation on the fly, without retraining or fine-tuning. The numerical experiments confirm that the method accurately recovers non-trivial targets in an analytic setting, and that it scales to learned diffusion models in high dimension, where it isolates, combines and steers the concepts captured by the individual agents. Several directions remain open, including the reduction of weight degeneracy in high dimension, for instance through adaptive resampling or lower-variance divergence estimators, cheaper look-ahead estimates for steering, and the application of the framework to large-scale models and scientific domains such as molecular design.

\bibliographystyle{unsrtnat}
\bibliography{references}

\begin{appendices}
\section{}\label{app:lemma_proof}
Here we provide the proof of Lemma~\ref{lem:feynman_kac}. We rewrite Equation~\eqref{eqn:Feynman_Kac_diffusion_models} for the unnormalized without all of the explicit $x$ and $t$ dependencies:
\begin{equation*}
    \partial_t p = - \langle \nabla, pv \rangle + p g.
\end{equation*}

Now, call the corresponding partition function (i.e. normalization factor) of the unnormalized distribution $Z(t)$, and call the normalized version of the distribution $\widetilde{p}$. In this case,
\begin{equation*}
    \widetilde{p}(x, t) = \frac{p(x, t)}{Z(t)} \quad \Rightarrow \quad p(x, t) = Z(t) \widetilde{p}(x, t).
\end{equation*}
Plugging this into our equation gives $\partial_t (Z\tilde{p}) = - \langle \nabla, Z\tilde{p}  v \rangle + Z\tilde{p} g$, and therefore
\begin{align*}
  Z \partial_t \widetilde{p} + \widetilde{p} \partial_t Z = - Z \langle \nabla, \widetilde{p} v \rangle + Z \tilde{p} g \quad \text{and} \quad
  \partial_t \widetilde{p} = -\langle \nabla, \widetilde{p} v \rangle + \widetilde{p} \left( g - \tfrac{1}{Z} \partial_t Z \right).
\end{align*}

Now, using the fact that $Z(t)=\int p(x,t)\mathrm{d}x$, we have that
\begin{align*}
  \frac{\partial_t Z}{Z} = \frac{1}{Z} \int (\partial_t p)(x, t)\mathrm{d}x = \frac{1}{Z} \int \big(- \langle \nabla, pv \rangle + pg\big)\mathrm{d}x = - \int \langle \nabla, \widetilde{p} v \rangle \mathrm{d}x + \int \widetilde{p}g \,\mathrm{d}x.
\end{align*}
Note that due to the Divergence Theorem and the fact that $\widetilde{p}(x, t) \rightarrow 0$ as $x \rightarrow \infty$, the first term is $0$. Thus, we have that
\begin{equation*}
    \frac{\partial_t Z}{Z} = \int \widetilde{p}(x, t) g(x, t) \,\mathrm{d}x,
\end{equation*}
which overall gives
\begin{equation*}
    g(x, t) - \tfrac{1}{Z} \partial_t Z =  g(x, t) - \int \widetilde{p}(x, t) g(x, t) \,\mathrm{d}x =: \overline{g}(x, t).
\end{equation*}

Plugging this into the equation for $\partial_t \widetilde{p}$ gives
\begin{equation*}
    \partial_t \widetilde{p} = -\langle \nabla, \widetilde{p} v \rangle + \widetilde{p}\,\overline{g}.
\end{equation*}
However, we know that such an equation for the normalized distribution may be sampled from via the weighted SDE~\eqref{eqn:SDE_simulation}. This result is derived in Section 2 of~\cite{skreta2025feynman}.
\end{appendices}

\end{document}